\documentclass[11pt]{article}
\newcommand{\corresponding}{*}
\usepackage[final]{acl}

\usepackage{times}
\usepackage{latexsym}

\usepackage[T1]{fontenc}

\usepackage[utf8]{inputenc}

\usepackage{microtype}

\usepackage{inconsolata}

\usepackage{graphicx}

\usepackage{subcaption}
\usepackage{booktabs} 
\usepackage[table]{xcolor}
\usepackage{amssymb}
\usepackage{mathtools}
\usepackage{amsthm}
\usepackage{multirow} 
\usepackage{arydshln} 

\usepackage{amsmath}
\usepackage{amsfonts}
\usepackage{bm}
\usepackage{algorithm}
\usepackage{algorithmic}
\usepackage{hyperref}

\usepackage[capitalize,noabbrev]{cleveref}

\usepackage{xcolor}
\theoremstyle{plain}
\newtheorem{theorem}{Theorem}[section]

\newtheorem{lemma}[theorem]{Lemma}

\theoremstyle{definition}

\theoremstyle{remark}

\title{Mutual Enhancement Between Global Tokens and Patch Tokens: From Theory to Practice}

\author{
  Xiusheng Huang\textsuperscript{1,2,3},
  Xin Jiang\textsuperscript{3},
  Jun Zhao\textsuperscript{1,2}, 
  Yequan Wang\textsuperscript{3\corresponding},
  Kang Liu\textsuperscript{1,2\corresponding}
 \\
  $^{1}$The Key Laboratory of Cognition and Decision Intelligence for Complex Systems, \\ Institute of Automation, Chinese Academy of Sciences, Beijing, China \\
  $^{2}$School of Artificial Intelligence, University of Chinese Academy of Sciences \\
  $^{3}$Beijing Academy of Artificial Intelligence\\
  \texttt{huangxiusheng2020@ia.ac.cn}, \texttt{jiangxin@baai.ac.cn}\\
    \texttt{tshwangyequan@gmail.com}, 
    \texttt{\{jzhao,kliu\}@nlpr.ia.ac.cn}
}

\begin{document}
\maketitle
\let\thefootnote\relax\footnotetext{\corresponding Corresponding authors}
\begin{abstract}

Accurate and effective discrete image tokenization is crucial for long image sequence processing. However, current methods rigidly compress all content at a fixed rate, ignoring the variable information density of images and leading to either redundancy or information loss. Inspired by information entropy, we propose \textbf{TaTok}, a \textbf{T}heoretically grounded \textbf{a}daptive image \textbf{Tok}enization framework. We rigorously identify two key drawbacks in existing methods: information insufficiency when reconstructing images with patch tokens alone, and information redundancy among patch tokens. To address these, we introduce global tokens that model mutual information across patch tokens, and a \textbf{D}ynamic \textbf{T}oken \textbf{F}iltering (\textbf{DTF}) algorithm based on cumulative conditional entropy to eliminate redundancy. Experiments confirm TaTok's state-of-the-art performance, delivering a 1.3$\times$ gFID improvement and 8.7$\times$ inference speedup. By allocating tokens according to information richness, TaTok enables more compressed yet accurate image tokenization, offering valuable insights for future research.

\end{abstract}

\section{Introduction}
Accurate and effective discrete image tokenization is crucial for processing long image sequences \cite{vqvae,vqvae2,vit-vqgan}. However, the inherent complexity and variable information density of images pose significant bottlenecks for current tokenization methods—these methods rigidly compress all content at a fixed rate, leading to either redundancy or information loss \cite{residual-vq}.


Discrete visual tokenizers typically comprise an encoder, a quantizer, and a decoder~\cite{vqgan}. The encoder splits an image into patch tokens and embeds them into a vector sequence~\cite{vaswani2017attention,raffel2020exploring}, which the quantizer discretizes~\cite{radford2019language,brown2020language} for the decoder to reconstruct the image. Since reconstruction is trivial with unrestricted sequence length (e.g., retaining all RGB pixels)~\cite{zhang2022opt,chatgpt}, a high-performance tokenizer essentially acts as a compressor that preserves core information in compact tokens. Developing efficient, scalable, and theoretically sound tokenizers is thus pivotal for advancing unified multimodal models~\cite{bard}.

\begin{figure}[t]
    \centering
    \includegraphics[width=7.7cm]{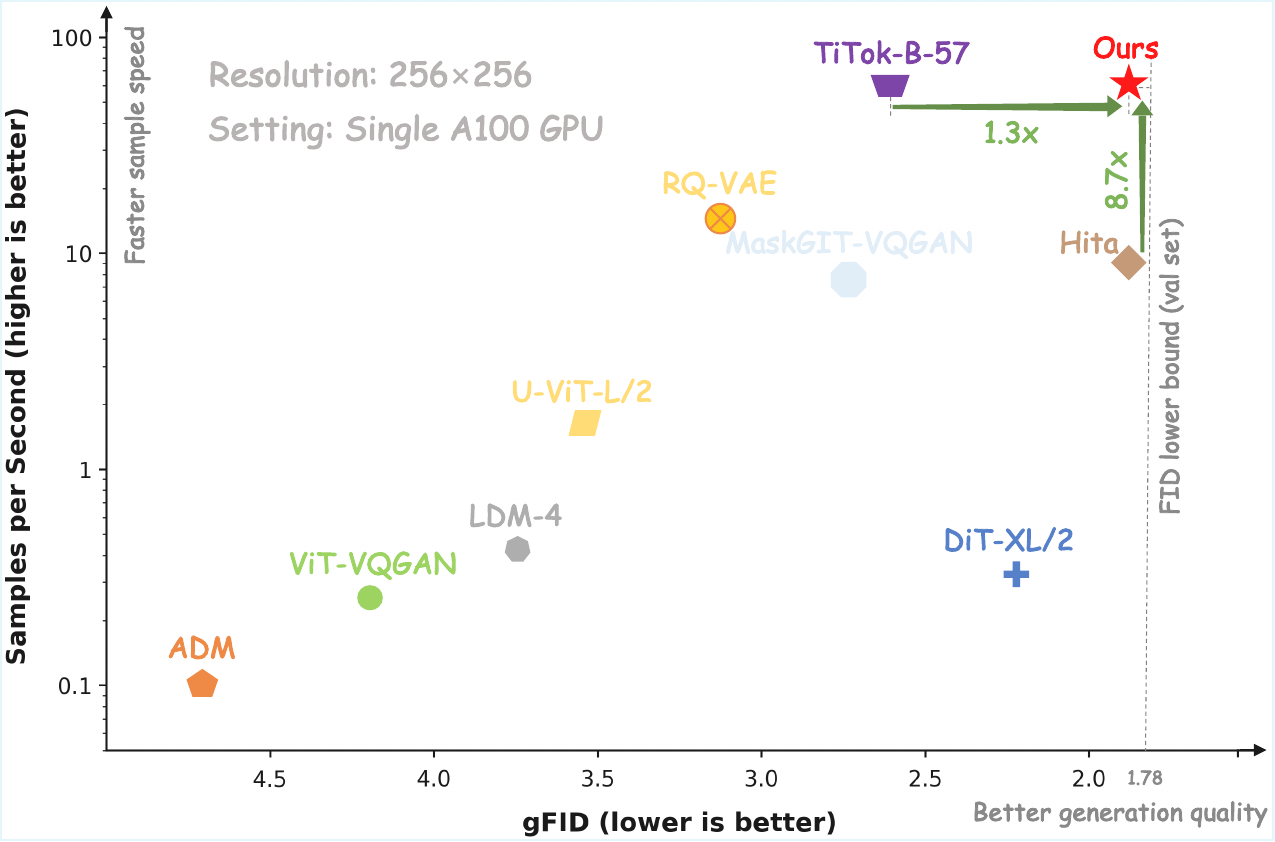}
    \caption{ The comprehensive Speed-Quality Comparison of TaTok and Existing Approaches for ImageNet 256\texttimes256 Image Generation. The sampling speed is measured with an A100 GPU.}
    \label{fig1}
\end{figure}


In real-world scenarios, images carry varying amounts of information due to differences in distribution, texture, and other characteristics~\cite{kaplan2020scaling,henighan2020scaling}, suggesting that the optimal token count should adapt to image-specific features. However, most existing tokenizers apply a fixed compression rate to all images, regardless of architectural or quantization advances. This one-size-fits-all strategy yields redundant tokens for simple images and insufficient tokens for complex ones (causing key information loss), making downstream understanding or generation tasks inefficient or even intractable~\cite{yu2021vector,yu2022scaling}. This is one of the core bottleneck limiting current tokenizers.

To deeply dissect this bottleneck, we conduct theoretical analysis via information entropy theory \cite{gray2011entropy}, formally proving that existing discrete visual tokenization methods have two fundamental defects: (1) \textbf{information insufficiency} (patch tokens alone cannot fully capture image global information) and (2) \textbf{information redundancy} (prevalent among patch tokens). Existing studies attempt to tackle these limitations but lack a principled integrated solution \cite{titok,zheng2025holistic}: hierarchical tokenization methods (e.g., ViT-VQGAN \cite{vit-vqgan}, Swin Transformer \cite{liu2021swin}) capture global context via patch aggregation but raise computational complexity and fail to offset global information loss (Theorem \ref{theorem1} proves the irreducible reconstruction loss lower bound of patch-only schemes), while dynamic token selection approaches (e.g., sparse tokenizers \cite{sfid}) reduce redundancy via heuristic rules (e.g., feature magnitude thresholding) without theoretical guidance on sufficient information for reconstruction, often trading quality for compression. Critically, these methods treat global information modeling and redundancy elimination as separate tasks, failing to synergistically address both defects \cite{maskgit,zheng2025holistic}.

Against this backdrop, inspired by information entropy \cite{gray2011entropy} and rate-distortion \cite{berger2003rate} theory, we propose \textbf{TaTok}, a \textbf{T}heoretically grounded \textbf{a}daptive image tokenization framework. TaTok addresses the dual defects via two complementary mechanisms: \textit{global information completion} to compensate for the inherent insufficiency of patch tokens, and \textit{adaptive redundancy pruning} to dynamically match the number of tokens with the image's information density.


To tackle information insufficiency, we theoretically derive the necessity of introducing a learnable global token that explicitly models the holistic semantic and structural information of images (Lemma \ref{Lemma2.1}). Unlike the auxiliary global features in previous works \cite{titok,zheng2025holistic}, our global token is optimized to minimize conditional entropy, thereby directly enhancing the mutual information between the augmented token sequence and the original image. As formally proven in Theorem \ref{theorem2}, this augmentation not only enriches the information carried by the token sequence but also reduces the theoretical infimum of reconstruction loss, laying a rigorous foundation for high-fidelity image recovery with a limited number of tokens.

For redundancy elimination, we propose a \textbf{D}ynamic \textbf{T}oken \textbf{F}iltering (\textbf{DTF}) algorithm based on cumulative conditional entropy, which adaptively selects the minimum number of patch tokens required for image reconstruction. Guided by rate-distortion theory \cite{berger2003rate}, DTF introduces an information loss rate to balance compression efficiency and reconstruction quality, ensuring that the combined information of global tokens and selected patch tokens meets a predefined minimum information requirement. Specifically, we sort patch tokens by their conditional entropy, a metric quantifying the unique information each token contributes given the global token, and accumulate their information until the dual constraints of sufficient information and an acceptable loss rate are satisfied. This design ensures that only information-rich patch tokens are retained and redundant ones are filtered out, enabling adaptive token allocation that aligns with the inherent information density of each image.


Notably, TaTok is the first tokenizer that unifies global tokens and the dynamic filtering mechanism into an end-to-end trainable framework without introducing additional inference overhead. Extensive experiments demonstrate that TaTok achieves SOTA compression performance. The contributions of this work are summarized as follows:
\begin{itemize}
    \item We theoretically prove that current tokenizers suffer from two critical flaws: global information deficiency and information redundancy in patch tokens.
    \item To address these two flaws, we propose TaTok, which unifies global tokens and the dynamic filtering mechanism into an end-to-end trainable framework, and theoretically proves its superiority.
    \item Extensive experiments show that TaTok achieves superior performance, striking a favorable balance between reconstruction speed and quality.
\end{itemize}

\section{Motivation: Existing inherent flaws}
Focusing on the core limitations of discrete visual tokenizers (based on ViT encoders) in image reconstruction tasks, this section rigorously proves two inherent flaws using information theory and rate-distortion \cite{berger2003rate} theory: information insufficiency and information redundancy.

\subsection{Framework and Core Constraints of Discrete Visual Tokenizers}
We consider a natural image represented as \( x \in \mathcal{X} \subseteq \mathbb{R}^{H \times W \times C} \), where \( H \) and \( W \) denote the height and width of the image, respectively, and \( C \) denotes the number of channels. The probability distribution \( P(x) \) of the image satisfies \textbf{local stationarity} (statistical correlation between any pixel and its neighboring pixels) and \textbf{non-local semantic correlation} (distributional association between different regions of the same semantic object) \cite{vit-vqgan}. A discrete visual tokenizer consists of three deterministic mappings:

\begin{itemize}
    \item  \textbf{Patch Partitioning}: The image \( x \) is partitioned into \( N = \lceil \frac{H - s(1-r)}{s r} \rceil \times \lceil \frac{W - s(1-r)}{s r} \rceil \) patches with size \( s \times s \) and overlap rate \( r \in [0,1) \), denoted as \( \{x_1, x_2, ..., x_N\} \) (where \( x_i \in \mathbb{R}^{s^2C} \));

    \item  \textbf{ViT Encoding}: A global interaction mapping \( \text{Enc}: \mathbb{R}^{N \times s^2C} \to \mathbb{R}^{N \times D} \) (where \( D \) is the token dimension), with the pipeline: “Patch Embedding \( \to \) Positional Encoding \( \to \) Self-Attention \( \to \) FFN", outputting a sequence of continuous tokens \( z = \text{Enc}(x) \);

    \item \textbf{Quantization Discretization}: A quantization function \( Q: \mathbb{R}^{N \times D} \to \{1,2,...,K\}^N \) (where \( K \) is the codebook size), outputting discrete token indices \( q = Q(z) \). The reconstructed image \( \hat{x} = \text{Dec}(q) \) is finally obtained via the decoder \( \text{Dec}: \{1,...,K\}^N \to \mathbb{R}^{H \times W \times C} \).
\end{itemize}

The core goal of tokenization is to minimize the reconstruction loss \( \mathcal{L}(x, \hat{x}) = \mathbb{E}[\|x - \hat{x}\|_2^2] \) under a finite code rate \( R = \frac{N \log_2 K}{H W C} \) (bits per pixel).

\subsection{Defect I: Information Insufficiency}\label{sec3.2}

The essence of information insufficiency is that the irreversible mappings in the tokenization pipeline are constrained by rate-distortion \cite{berger2003rate} theory, making zero-distortion reconstruction impossible under finite code rates. First, both patch partitioning and ViT encoding are deterministic mappings. By the Data Processing Inequality \cite{beaudry2011intuitive}, the differential mutual information of continuous variables satisfies \( I(x; z) \leq I(x; \{x_1,...,x_N\}) \). Since patch partitioning is a full-coverage mapping (\( x = \text{Concatenate}(x_1,...,x_N) \)), \( I(x; \{x_1,...,x_N\}) = h(x) \) (where \( h(x) \) is the differential entropy of \( x \)), so \( I(x; z) \leq h(x) \). When \( D < s^2C \), the dimension reduction in patch embedding is a many-to-one mapping—multiple \( \{x_1,...,x_N\} \) correspond to the same \( z \)—thus \( I(x; z) < h(x) \), and information loss occurs during encoding.

Further, the quantization operation \( Q(z) = q \) is a deterministic interval-partitioning mapping. The mutual information between continuous and discrete variables satisfies \( I(x; q) = I(x; z) - I(x; z \mid q) \). Since \( I(x; z \mid q) \geq 0 \), we have \( I(x; q) \leq I(x; z) < h(x) \), and quantization exacerbates information loss. According to \textbf{Shannon's Rate-Distortion Theorem} \cite{berger2003rate}, for any continuous source, the rate-distortion function satisfies \( R(D) \geq h(x) - \frac{1}{2} \log_2(2\pi e D) \). Specifically, for a Gaussian source, \( R(D) = \frac{1}{2} \log_2 \left( \frac{\sigma_x^2}{D} \right) \) (when \( D \leq \sigma_x^2 \), where \( \sigma_x^2 = \text{Var}(x) \) is the pixel variance). Natural images are not Gaussian distributions, and their rate-distortion functions have more complex forms, but they still satisfy similar lower-bound constraints—a finite code rate \( R \) corresponds to a positive minimum distortion \( D_{\text{min}} > 0 \).

\begin{theorem}[Reconstruction Error Infimum of Discrete Visual Tokenizers]\label{theorem1}
For a discrete visual tokenizer consisting of an encoder $\text{Enc}$, a quantizer $Q$, and a decoder $\text{Dec}$, the theoretical infimum of the reconstruction loss $\mathcal{L}$ (measuring discrepancy between original image $x$ and reconstructed image $\text{Dec}(Q(\text{Enc}(x)))$) satisfies:
\begin{equation}
\begin{aligned}
    \epsilon_{\text{inf}} = &\inf_{\text{Enc},Q,\text{Dec}} \mathcal{L}\left(x, \text{Dec}\left(Q\left(\text{Enc}(x)\right)\right)\right) \\ &\geq D_{\text{min}} > 0
\end{aligned}
\end{equation}
\end{theorem}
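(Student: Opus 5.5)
The argument is a rate–distortion converse: any tokenizer is a particular code of finite rate, so its distortion cannot beat the distortion–rate function. Fix an arbitrary triple $(\text{Enc},Q,\text{Dec})$ and write $q=Q(\text{Enc}(x))\in\{1,\dots,K\}^N$ and $\hat{x}=\text{Dec}(q)$. All three maps are deterministic, so $x\to z\to q\to\hat{x}$ is a Markov chain. The data processing inequality, applied as in Section~\ref{sec3.2}, gives
\begin{equation}
I(x;\hat{x})\le I(x;q)\le H(q)\le N\log_2 K = RHWC .
\end{equation}
The key point is that this rate bound depends only on the code alphabet $K^N$ and not on the particular encoder, quantizer or decoder. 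Every admissible tokenizer therefore induces a test channel $P_{\hat{x}\mid x}$ whose mutual information is at most the fixed budget $RHWC$.

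Next I invoke the definition of the rate–distortion function. Let $D(R)$ denote the distortion–rate function, i.e.\ the inverse of $R(D)$: the infimum of $\mathbb{E}\|x-\hat{x}\|_2^2$ over all conditional laws $P_{\hat{x}\mid x}$ with $I(x;\hat{x})\le RHWC$. The channel induced by any tokenizer is feasible for this problem, so $\mathcal{L}(x,\hat{x})\ge D(R)$. This lower bound is uniform over the triple, so it survives taking the infimum, giving $\epsilon_{\inf}\ge D(R)=:D_{\min}$.

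The remaining and main step is to show $D_{\min}>0$. Here I use the Shannon lower bound quoted above, $R(D)\ge h(x)-\frac12\log_2(2\pi e D)$, applied per dimension with the total dimension $n=HWC$. If the total distortion is $D$, achieving it requires
\begin{equation}
I(x;\hat{x})\ge h(x)-\frac{n}{2}\log_2\!\left(\frac{2\pi e D}{n}\right).
\end{equation}
As $D\to 0$ the right-hand side tends to $+\infty$ whenever $h(x)>-\infty$, which holds for an absolutely continuous image distribution. The budget $RHWC$ is finite, so the distortion must satisfy
\begin{equation}
D\ge \frac{n}{2\pi e}\,2^{\,2(h(x)-RHWC)/n}>0 .
\end{equation}
The explicit constant is a convenient certificate, but the more robust argument is structural. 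A finite set of at most $K^N$ reconstruction points cannot approximate a law with a density to arbitrary precision in mean square, because the Lebesgue measure of small balls around finitely many points vanishes.

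The obstacle I anticipate is this positivity step. The Shannon lower bound requires $h(x)$ to be finite, and a natural-image distribution might be singular, for instance concentrated on a low-dimensional manifold. In that case $h(x)=-\infty$ and the explicit bound is vacuous. To cover this case I would fall back on the quantization argument: since $\hat{x}$ takes at most $K^N$ values, $\mathbb{E}\|x-\hat{x}\|^2\ge \mathbb{E}\min_{j}\|x-c_j\|^2$ for the codepoints $c_j$. This quantity is strictly positive for any fixed codebook unless $x$ is supported on at most $K^N$ points. Taking the infimum over codebooks stays positive by a compactness argument on bounded pixel ranges. I would state the theorem under the assumption that $P(x)$ is non-atomic (or has a density), which matches the paper's treatment of $x$ as a continuous source.
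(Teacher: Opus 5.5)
Your proposal is correct and follows essentially the same route as the paper. In Section~\ref{sec3.2} the paper chains the data processing inequality through the deterministic maps and then uses the Shannon lower bound $R(D)\ge h(x)-\frac{1}{2}\log_2(2\pi e D)$ to conclude that a finite code rate forces a positive minimum distortion; the explicit form $D_{\text{min}}\ge\frac{1}{2\pi e}\exp(2(h(x)-R))$ appears in the proof of Theorem~\ref{theorem2}.

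Your version is more careful than the paper's in several places: you bound $I(x;q)\le H(q)\le N\log_2 K$ rather than comparing $I(x;z)$ with $h(x)$, you scale the bound correctly with $n=HWC$, you make the uniformity over $(\text{Enc},Q,\text{Dec})$ explicit, and you add the finite-codebook argument for singular sources. One small slip: having a density does not by itself guarantee $h(x)>-\infty$. That case is harmless, though, because your non-atomic fallback already covers it.
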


where $D_{\text{min}}$ denotes a non-zero positive constant. This implies that high-frequency details and continuous grayscale information of the original image \textbf{\textit{cannot be fully recovered using only patch tokens}}. We present corresponding verification experiments to validate this in Experiment \ref{exI}.

Note that this conclusion relies on the standard design of discrete visual tokenizers (finite code rate, non-invertible encoding). While zero-distortion reconstruction is theoretically feasible with invertible encoders (e.g., Flow-based Transformers \ {ma2024sit}) and infinite code rates ($K \to \infty$), this contradicts the core design goals of discrete tokenizers (e.g., “low code rate, discrete representation") and is thus impractical in real-world scenarios.

\begin{figure*}[t]
    \centering
    \includegraphics[width=15.0cm]{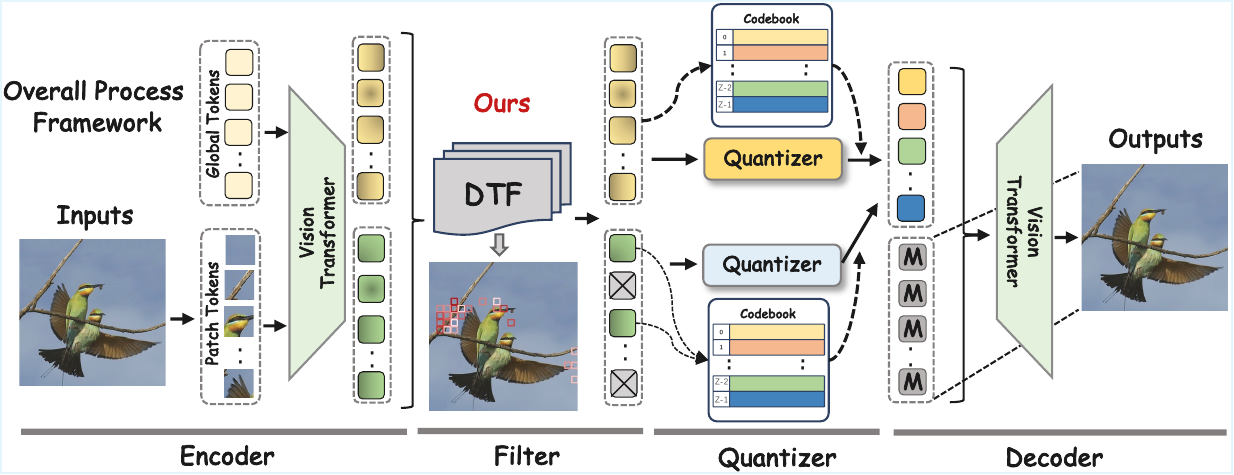}
    \caption{ The overall process framework of TaTok. The input image is encoded into patch tokens via a vision transformer encoder, and we additionally introduce learnable global tokens. Our proposed DTF module then removes redundant patch tokens, which are further quantized by dual quantizers with codebooks. Finally, the decoder reconstructs the output image from the processed tokens.}
    \label{fig2}
\end{figure*}

\subsection{Defect II: Information Redundancy}\label{DefectII}
Based on multivariate mutual information \cite{lee2013feature}, the redundancy of a continuous token sequence \( z = [z_1,...,z_N] \) is defined as:
\begin{equation}
\begin{aligned}
    \text{Red}(z) = \frac{\sum_{i=1}^N h(z_i) - h(z)}{\sum_{i=1}^N h(z_i)}
    \label{eq1}
\end{aligned}
\end{equation}
where the numerator is the total mutual information between tokens (quantifying information overlap), and the denominator is the sum of the differential entropies of individual tokens (quantifying the total information of tokens). It should be noted that the differential entropy \( h(\cdot) \) of continuous variables can be negative, but this only affects the absolute value of the entropy and does not change the quantitative meaning of redundancy: when tokens are fully independent, \( h(z) = \sum_{i=1}^N h(z_i) \) and \( \text{Red}(z) = 0 \); when tokens are fully correlated, \( h(z) \approx h(z_1) \) (information of other tokens is completely contained in the first token) and redundancy approaches 1.

The local stationarity of natural images implies that adjacent patches \( x_i, x_j \) satisfy \( I(x_i; x_j) = h(x_i) + h(x_j) - h(x_i,x_j) > 0 \); non-local semantic correlation means non-adjacent patches \( x_k, x_l \) also satisfy \( I(x_k; x_l) > 0 \). The global interaction mechanism of ViT encoders typically preserves or enhances correlations between patches; thus, for relevant patch pairs where the original patches satisfy \( I(x_i; x_j) > 0 \), their corresponding tokens usually also satisfy \( I(z_i; z_j) > 0 \). Accordingly, the total mutual information between tokens \( \sum_{i=2}^N I(z_i; z_1,...,z_{i-1}) \geq \sum_{\text{relevant pairs}} I(z_i; z_j) > 0 \), so substituting into the redundancy definition gives \( \text{Red}(z) > 0 \); for discrete tokens \( q = Q(z) \), we similarly obtain \( \text{Red}(q) > 0 \). This indicates that \textbf{\textit{there is information redundancy between patch tokens}}. We present corresponding verification experiments to validate this in Experiment \ref{exII}. 

Redundancy has two negative impacts: (1) Increased Computational Overhead, the self-attention complexity of ViT is \( O(N^2D) \); redundancy prevents excessive reduction of \( N \), leading to complexity growing with \( N^2 \); (2) Degraded Reconstruction Quality, superposition of redundant information in tokens causes blocking artifacts (non-overlapping partitioning) or over-smoothing (overlapping partitioning) in reconstructed images, while amplifying quantization noise.

Overall, the information insufficiency and redundancy flaws of discrete visual tokenizers originate from the inherent contradiction between their discretization design and the continuous correlation of natural images: information insufficiency stems from rate-distortion constraints of three irreversible mappings, while redundancy stems from the inherent correlation of natural images and information fusion via global interaction. These two defects have not been completely eliminated in previous work, but they inspired us to design a new tokenizer framework to eliminate them.


\section{TaTok: Theoretically Grounded Adaptive Tokenizer}

To address the dual defects of \textbf{information insufficiency} and \textbf{information redundancy} in discrete visual tokenizers, we theoretically derive the necessity of introducing a learnable global token $g \in \mathbb{R}^D$ to augment the token sequence, and proposes an adaptive patch token filtering module to eliminate redundancy.

\subsection{Necessity of Global Token: Information Completion under Practical Constraints}\label{sec3_1}

According to Theorem \ref{theorem1}, the core of the information insufficiency problem lies in the fact that patch tokens cannot fully capture the global information $\mathcal{G}(x)$, resulting in a strict upper bound on mutual information $I(x; z) < h(x)$. Theoretical deduction shows that introducing global tokens has an enhancing effect on mutual information, and combined with rate distortion theory, it is proven that the global token enhancement mechanism can reduce the theoretical lower bound of reconstruction loss.

\begin{lemma}[Mutual Information Enhancement Effect of Global Token]\label{Lemma2.1}

For the augmented token sequence $z' = [g; z]$ generated by the global token-augmented encoder $\text{Enc}'$, if $g$ is optimized to approximate the global information $\mathcal{G}(x)$ (e.g., $\lim_{\text{training steps} \to \infty} h(\mathcal{G}(x) \mid g) = 0$), the mutual information between the augmented token sequence and the original image satisfies:
\begin{equation}
\begin{aligned}
    I(x; z') = I(x; z) + I(x; g \mid z)
    \label{eq2}
\end{aligned}
\end{equation}
\end{lemma}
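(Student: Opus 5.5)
The identity in Eq.~\eqref{eq2} is, at its core, the chain rule for mutual information applied to the pair $(g,z)$. I would therefore prove it by expanding $I(x; z') = I(x; g, z)$ with the chain rule, and then use the hypothesis on $g$ to argue that the second term is strictly positive, which is what gives the ``enhancement''. Since $z' = [g; z]$ is a deterministic concatenation, $z'$ and the pair $(g, z)$ generate the same $\sigma$-algebra. Hence $I(x; z') = I(x; g, z)$ by invariance of mutual information under bijective (measurable) relabelings.

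I would carry this out in three steps.

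\textbf{Step 1 (chain rule).} Write
\begin{equation*}
I(x; g, z) = h(x) - h(x \mid g, z) = \bigl[h(x) - h(x\mid z)\bigr] + \bigl[h(x\mid z) - h(x \mid g, z)\bigr].
\end{equation*}
The first bracket is $I(x;z)$ and the second is $I(x; g\mid z)$, which is exactly Eq.~\eqref{eq2}. To avoid the ill-definedness of differential entropies, which may be $-\infty$ or may not exist when $z$ and $g$ are deterministic functions of $x$, I would state the chain rule directly for mutual information via Kolmogorov's formula. It holds for arbitrary random variables whenever the terms are finite, and this is the version I would cite.

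\textbf{Step 2 (nonnegativity, and hence enhancement).} Because $I(x; g\mid z) \geq 0$, we get $I(x;z') \geq I(x;z)$. This already shows that augmentation never loses information.

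\textbf{Step 3 (strictness from the hypothesis on $g$).} Here I would use the assumption $h(\mathcal{G}(x)\mid g)\to 0$, interpreted as $g$ asymptotically determining $\mathcal{G}(x)$, together with the premise of Theorem~\ref{theorem1} and \S\ref{sec3_1} that $z$ does not capture $\mathcal{G}(x)$, i.e.\ $I(\mathcal{G}(x); x\mid z) > 0$. By the data processing inequality, $I(x; g\mid z) \geq I(x; \mathcal{G}(x)\mid z)$ in the limit, so the increment is positive.

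The main obstacle is Step 3, because the stated hypothesis is phrased with conditional differential entropy tending to zero. That condition does not literally encode ``$g$ determines $\mathcal{G}(x)$'': for a deterministic relationship, the differential entropy goes to $-\infty$, not to $0$. My first move would be to recast the hypothesis as $I(\mathcal{G}(x); g) \to H(\mathcal{G}(x))$, for instance by quantizing $\mathcal{G}(x)$ or treating it as discrete. I would then pass to the limit using lower semicontinuity of mutual information. The identity Eq.~\eqref{eq2} itself needs none of this; only the interpretive positivity claim does.
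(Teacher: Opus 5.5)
Your proposal is correct and follows the same route as the paper: identify $z'$ with the pair $(g,z)$, obtain Eq.~\eqref{eq2} from the chain rule, and get positivity of $I(x;g\mid z)$ by tying it to $I(x;\mathcal{G}(x)\mid z)>0$ from Defect~I. Where the paper uses the two-sided approximation $h(g\mid z)-h(g\mid x,z)\approx h(\mathcal{G}(x)\mid z)-0$, you use a one-sided conditional data-processing bound, which is all that positivity requires; you also correctly point out that a deterministic relation sends conditional differential entropy to $-\infty$ rather than $0$, so both the hypothesis $h(\mathcal{G}(x)\mid g)\to 0$ and the paper's step $h(g\mid x,z)\approx 0$ are mis-phrased.
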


where $I(x; g \mid z) \approx I(x; \mathcal{G}(x) \mid z) > 0$. This lemma indicates that the global token augmentation mechanism enables the augmented token sequence to carry more image information, directly compensating for the information loss caused by patch partitioning, dimension reduction, and quantization operations. We provide detailed theoretical proof in the Appendix \ref{Proof1}.

\begin{theorem}[Reconstruction Loss Reduction via Global Token]\label{theorem2}
For a discrete visual tokenizer augmented with a learnable global token (consisting of encoder $\text{Enc}'$, quantizer $Q$, and decoder $\text{Dec}'$), the theoretical infimum of its reconstruction loss satisfies:
\begin{equation}
\begin{aligned}
\epsilon'_{\text{inf}} = \inf_{\text{Enc}', Q, \text{Dec}'} \mathcal{L}\left(x, \text{Dec}'\left(Q\left(\text{Enc}'(x)\right)\right)\right) < \epsilon_{\text{inf}}
\end{aligned}
\end{equation}
\end{theorem}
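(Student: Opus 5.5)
The argument combines Lemma \ref{Lemma2.1} with the rate-distortion reasoning behind Theorem \ref{theorem1}. Both infima are controlled by the mutual information the discrete code can carry about $x$. We show that the global token strictly increases this quantity, and then push that increase through a monotone distortion--information relation to obtain a strict decrease of the loss infimum.

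\textbf{Step 1: embed the baseline into the augmented class.} Any triple $(\text{Enc},Q,\text{Dec})$ can be realised as an augmented triple $(\text{Enc}',Q,\text{Dec}')$. Take $\text{Enc}'(x)=[g;\text{Enc}(x)]$ with an arbitrary $g$, and let $\text{Dec}'$ ignore the quantized global index. The feasible set of the augmented infimum therefore contains that of the original, so $\epsilon'_{\text{inf}}\le\epsilon_{\text{inf}}$ immediately. All remaining work is the strict inequality.

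\textbf{Step 2: an information-distortion bound.} We use the Shannon lower bound already quoted in Section \ref{sec3.2}. For any code $q$ with reconstruction $\hat{x}=\text{Dec}(q)$, it gives
\[
\mathcal{L}(x,\hat{x})\;\ge\;\Phi\bigl(I(x;q)\bigr),
\]
where $\Phi(I)=\frac{1}{2\pi e}2^{2(h(x)-I)/(HWC)}$ (per-pixel normalisation) is strictly decreasing. We argue that the infimum is essentially attained along a near-optimal sequence, so that $\epsilon_{\text{inf}}=\Phi(I^\ast)$ with $I^\ast=\sup I(x;Q(\text{Enc}(x)))$ under the rate budget. This holds exactly in the Gaussian case stated in the paper. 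In general we would adopt it as the paper's modelling assumption that the distortion-rate function $D(R)$ characterises the optimum.

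\textbf{Step 3: strict gain from the global token.} Take a (near-)optimal baseline code $q$ and augment it with a quantized global token $q_g=Q(g)$. By the chain rule, as in Lemma \ref{Lemma2.1} applied to the discrete codes,
\[
I(x;q,q_g)=I(x;q)+I(x;q_g\mid q).
\]
Lemma \ref{Lemma2.1} gives $I(x;g\mid z)\approx I(x;\mathcal{G}(x)\mid z)>0$, so the same positivity should survive quantization for a fine enough codebook on $g$. Hence the augmented sequence attains mutual information strictly above $I^\ast$. Since $\Phi$ is strictly decreasing, we get $\epsilon'_{\text{inf}}\le\Phi(I^\ast+\delta)<\Phi(I^\ast)=\epsilon_{\text{inf}}$ for some $\delta>0$.

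\textbf{Main obstacle.} The difficulty is Step 2 combined with strictness. The Shannon bound is only a lower bound, so a larger mutual information lowers the bound without automatically lowering the achieved infimum. We also need $\delta>0$ uniformly, not just for one fixed code.

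We would first handle this by working directly with the operational distortion-rate function. The augmented tokenizer has strictly more rate, namely $\log_2 K$ extra bits for $q_g$. For sources with strictly convex, strictly decreasing $D(R)$ on the relevant range, which holds whenever $D_{\min}>0$ as guaranteed by Theorem \ref{theorem1}, extra rate strictly lowers achievable distortion. The conditional-information term from Lemma \ref{Lemma2.1} then serves to certify that this extra rate is genuinely useful: it is not redundant with $z$, because $I(x;g\mid z)>0$.

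Honesty requires noting two points. The rigorous version needs the infimum to be approached by codes operating near $D(R)$. It also requires interpreting the architecture class as rich enough to reach that operating point. Both must be stated as standing assumptions.
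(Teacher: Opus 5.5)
Your skeleton is the paper's own. The appendix proof writes the augmented rate as $R'=R+\Delta R$ with $\Delta R=H(Q(g)\mid Q(z))/(HWC)$. This is your $I(x;q_g\mid q)$; the two coincide because all maps are deterministic. It inserts this into the Shannon lower bound to get $D'_{\min}=D_{\min}e^{-2\Delta R}<D_{\min}$. It then concludes $\epsilon'_{\inf}<\epsilon_{\inf}$ from $\epsilon_{\inf}\ge D_{\min}$ and $\epsilon'_{\inf}\ge D'_{\min}$. That last inference is exactly the non sequitur you flag under ``Main obstacle''. The paper also never proves even $\epsilon'_{\inf}\le\epsilon_{\inf}$, so your Step~1 and your diagnosis are real improvements.

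But your repair does not close the gap. Step~3 still uses the lower bound $\Phi$ as an upper bound. The standing assumption $\epsilon_{\inf}=D(R)$ that would license this is false in the theorem's setting. The infimum for a single image is a one-shot quantization error with $K^N$ cells, and this generally lies strictly above the distortion-rate value. That holds even for Gaussians, where the Shannon bound is tight for $D(R)$ itself: a scalar $\mathcal{N}(0,\sigma^2)$ with two cells has optimal MSE $(1-2/\pi)\sigma^2>\sigma^2/4=D(1)$. So ``holds exactly in the Gaussian case'' is wrong. Treating $D(\cdot)$ as an approximation does not rescue the strict sign either, since one extra $K$-ary token changes the rate by only $\log_2K/(HWC)$ bits per pixel.

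The missing idea is to argue operationally rather than through $D(\cdot)$.
\begin{itemize}
\item If the infimum ranges over all measurable triples, then $\epsilon_{\inf}$ is the optimal mean squared error of a quantizer of $x$ with at most $K^N$ cells. An optimal one exists because $x$ has finite second moment; take its cells $A_1,\dots,A_{K^N}$ with centroid reconstructions.
\item Since $x$ has a density, $\epsilon_{\inf}>0$. Hence some $A_j$ has positive probability and $x$ is not a.s.\ constant on it.
\item Choose a direction $u$ and threshold $t$ so that $A_j^{+}=A_j\cap\{u^\top x\ge t\}$ and $A_j^{-}=A_j\cap\{u^\top x<t\}$ both have positive probability. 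Their conditional means differ, so by the law of total variance, giving each half its own centroid strictly lowers the error.
\item Keep the optimal baseline code $q$ (your Step~1 embedding). Let the quantized global index be the indicator of $A_j^{+}$, which is possible since $K\ge2$, and let $\text{Dec}'$ read it.
\end{itemize}
This gives $\epsilon'_{\inf}<\epsilon_{\inf}$ with no rate-distortion asymptotics.

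Like the paper's argument, this uses only the extra code capacity. Positivity of $I(x;q_g\mid q)$ is a free design choice, so Lemma~\ref{Lemma2.1} and the semantics of $\mathcal{G}(x)$ play no role in the strict inequality. If the infimum is instead taken over a fixed ViT class, neither argument goes through without the richness assumption you already identified.
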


where $\epsilon_{\text{inf}}$ is the reconstruction loss infimum of the patch-token-only scheme described in Theorem \ref{theorem1}. We provide detailed theoretical proof in the Appendix \ref{Proof2}.

Notably, the global token scheme does not conflict with the core design goals of discrete visual tokenizers (low code rate, discrete representation, and efficient inference). Compared with other global information modeling methods (e.g., hierarchical patch aggregation, sparse attention), the global token scheme achieves a better trade-off among information modeling capability, computational complexity, and training stability under practical constraints. This provides a rigorous theoretical foundation for the subsequent design of improved tokenizer frameworks.

\subsection{Dynamic Token Filtering Algorithm}\label{sec3_2}

In order to eliminate the redundancy of patch tokens, especially under the condition of global tokens, we introduce a \textbf{D}ynamic \textbf{T}oken \textbf{F}iltering (\textbf{DTF}) algorithm based on conditional entropy. Consistent with previous research, the entire tokenizer completes the image reconstruction task during the pre training process. Based on rate-distortion \cite{berger2003rate} theory, DTF finds the minimum number of Patch Tokens \( N \) such that the total information of “Global Tokens + \( N \) Patch Tokens" meets the minimum information requirement for image reconstruction tasks.

\subsubsection{Core Constraint: Information Loss Rate}
To balance ``minimum number of Tokens'' and ``acceptable distortion'', introduce the information loss rate $\epsilon$ ($\epsilon \in [0,1)$), which indicates that the proportion of unique information lost after selection does not exceed $\epsilon$. Meanwhile, we must satisfy Eq.\ref{eq12}. Therefore, $N$ needs to satisfy the following dual constraints: (1) Minimum Information Constraint: $H_N \geq C$. (2)Information Loss Rate Constraint: $H_N \geq (1 - \epsilon) H_{\text{total}}$ (controls the proportion of information loss). Combining these constraints:
\begin{equation}
\begin{aligned}
    H_N \geq \max\{ T,\ (1 - \epsilon) H_{\text{total}} \}
    \label{eq25}
\end{aligned}
\end{equation}

where $T$ represents the minimum unique information that needs to be supplemented by patch tokens, detailed definitions are provided in Appendix \ref{Proof3}.

\subsubsection{Definition and Existence of $N$}
The theoretical information lower bound $N$ is the smallest positive integer satisfying:
\begin{equation}
\begin{aligned}
    N = \min \{ &k \in \mathbb{N}^+ \mid \sum_{i=1}^k H(\hat{p}_{\sigma(i)} \mid \hat{G}) 
    \geq \\
    &\max\{ T,\ (1 - \epsilon) H_{\text{total}} \} \}
    \label{eq25}
\end{aligned}
\end{equation}

We provide detailed theoretical proof in the Appendix \ref{Proof3}.

\subsection{TaTok De-Tokenization: Reconstruction via Augmented Token Sequence}

To implement TaTok’s image reconstruction, we design a de-tokenization pipeline using an augmented token sequence (replacing conventional 1D latent tokens).

\subsubsection{Augmented Latent Token Sequence}
Recall DTF: select top-\(N\) patch tokens (sorted by \(H(\mathbf{z}_i \mid \mathbf{G})\)) to form \( \mathbf{z}_\tau \in \mathbb{R}^{N \times D} \). We use \(k\) learnable global tokens \( \mathbf{G} = [\mathbf{g}_1, \dots, \mathbf{g}_k] \in \mathbb{R}^{k \times D} \) (for multi-scale global info) and concatenate them with \( \mathbf{z}_\tau \):
\begin{equation}
\begin{aligned}
\mathbf{z}_{\text{aug}} = \mathbf{G} \oplus \mathbf{z}_\tau \in \mathbb{R}^{(k+N) \times D}
\end{aligned}
\end{equation}
where \( \oplus \) denotes token-dimension concatenation.

\subsubsection{Quantization and Decoder Input}
Apply quantization \( \text{Quant}: \mathbb{R}^{(k+N) \times D} \to \{1,\dots,K\}^{(k+N)} \) (matching the \(Q(\cdot)\) of Theorem \ref{theorem1}) to get discrete tokens \( \mathbf{q}_{\text{aug}} = \text{Quant}(\mathbf{z}_{\text{aug}}) \). To align with initial patch count \(N_0\), complement \( \mathbf{q}_{\text{aug}} \) with \(N_0 - (k+N)\) replicated learnable mask tokens \( \mathbf{M} \), forming the decoder input:
\begin{equation}
\begin{aligned}
\mathbf{S} = \mathbf{q}_{\text{aug}} \oplus \mathbf{M}
\end{aligned}
\end{equation}
The extended ViT decoder outputs the reconstructed image:
\begin{equation}
\begin{aligned}
\hat{\mathbf{x}} = \text{Dec}'(\mathbf{S})
\end{aligned}
\end{equation}

\subsubsection{Training Loss}
The total loss $\mathcal{L}_{\text{total}}$ jointly optimizes tokenization (encoder+DTF+global tokens) and de-tokenization (decoder), comprising three weighted components:
1) Reconstruction Loss: $\mathcal{L}_{\text{rec}} = \mathbb{E}_{\mathbf{x}} [\|\mathbf{x} - \hat{\mathbf{x}}\|_2^2]$ (pixel-wise discrepancy).
2) Quantization Loss: $\mathcal{L}_{\text{commit}} = \mathbb{E}_{\mathbf{x}} \left\| \mathbf{z}_{\text{aug}} - \mathbf{q}_{\text{aug}} \right\|_2^2$, where $\mathbf{z}_{\text{aug}} = [g; \mathbf{z}]$ (augmented token sequence) and $\mathbf{q}_{\text{aug}}$ is the quantized codebook $\mathbf{E}$ output, aligning augmented tokens with the codebook.
3) Global Semantic Alignment Loss: $\mathcal{L}_{\text{glob}} = \mathbb{E}_{\mathbf{x}} \left[ h(\mathcal{G}(\mathbf{x}) \mid g) \right]$, minimizing conditional entropy between image holistic info $\mathcal{G}(\mathbf{x})$ and global token $g$ (Lemma \ref{Lemma2.1}, Theorem \ref{theorem2}). The total loss is defined as:
\[
\mathcal{L}_{\text{total}} = \mathcal{L}_{\text{rec}} + \lambda_1 \mathcal{L}_{\text{commit}} + \lambda_2 \mathcal{L}_{\text{glob}}
\]
where $\lambda_1, \lambda_2 > 0$ are balancing hyperparameters, and both are set to 1 throughout all experiments.

\begin{table*}[!t]
\centering
\small
\setlength{\tabcolsep}{4.0pt}    
\renewcommand{\arraystretch}{1.1} 
\caption{\textbf{ImageNet-1K $256 \times 256$ generation results evaluated.} $\dagger$: Trained on OpenImages \cite{openimage}; $\ddagger$: Trained on OpenImages, LAION-Aesthetics/-Humans \cite{schuhmann2022laion}. P: generator's parameters. S: sampling steps. T: throughput as samples per seconds on A100 with float32 precision.}\label{table1}
\begin{tabular}{lccccc|lcccc}
\hline 
\multicolumn{6}{c|}{\textbf{tokenizer}} & \multicolumn{5}{c}{\textbf{generator}} \\
\hline 
Model & Ratio $\uparrow$ & \#Tokens & Codebook Size & rFID $\downarrow$ & - & Model & gFID $\downarrow$ & P $\downarrow$ & S $\downarrow$ & T $\uparrow$ \\
\hline
\multicolumn{10}{c}{\textit{diffusion-based generative models}} \\
\hline
Taming-VQGAN$^\dagger$ & 1 & 1024 & 16384 & 1.14 & & LDM-8  & 7.76 & 258M & 200 & - \\
VAE$^\dagger$ & 1 & 4096$\times$3 & - & 0.27 & & LDM-4 & 3.60 & 400M & 250 & 0.4 \\ \hdashline 
\multirow{3}{*}{VAE $^\ddagger$} & \multirow{3}{*}{1} & \multirow{3}{*}{1024$\times$4} & \multirow{3}{*}{-} & \multirow{3}{*}{0.62} & & UViT-L/2  & 3.40 & 287M & 50 & 1.1 \\
& & & & & & UViT-H/2  & 2.29 & 501M & 50 & 0.6 \\
& & & & & & DiT-XL/2  & 2.27 & 675M & 250 & 0.6 \\
\hline
\multicolumn{10}{c}{\textit{transformer-based generative models}} \\
\hline
Taming-VQGAN  & 1 & 256 & 1024 & 7.94 & & Taming-Transformer  & 15.78 & 1.4B & 256 & 7.5 \\
RQ-VAE  & 1 & 256 & 16384 & 3.20 & & RQ-Transformer  & 8.71 & 1.4B & 64 & 16.1 \\
ViT-VQGAN  & 1 & 1024 & 8192 & 1.28 & & VIM-Large  & 4.17 & 1.7B & 1024 & 0.3 \\
Hita  & 1 & 569 & 16384 & 1.50 & & MaskGIT-ViT  & 1.89 & 177M & 8 & 10.8 \\
\hdashline 

SFTok-B-64 & 4.0x & 64 & 8192 & 1.44 & & MaskGIT-ViT  & 2.06 & 177M & 8 & 89.8 \\
SFTok-B-57 & 4.4x & 57 & 8192 & 1.53 & & MaskGIT-ViT  & 2.21 & 177M & 8 & 94.0 \\
TiTok-B-64 & 4.0x & 64 & 4096 & 1.70 & & MaskGIT-ViT  & 2.48 & 177M & 8 & 89.8 \\
TiTok-B-57 & 4.4x & 57 & 4096 & 1.75 & & MaskGIT-ViT  & 2.57 & 177M & 8 & 94.0 \\

\hdashline 
\rowcolor{gray!20}\textbf{TaTok(Ours)} & \textbf{4.5x} & \textbf{56.4} & 4096 & 1.51 & & MaskGIT-ViT  & \textbf{1.89} & 177M & 8 & \textbf{94.3} \\
\hline 
\end{tabular}
\end{table*}

\section{Experiments}
\subsection{Setup}
Following prior works, we set the codebook dimension for patch-level and global image tokens to 8 and 12, respectively. As discussed in , this configuration balances reconstruction quality and efficient codebook utilization. Both codebooks are fixed to a size of 4096, and the number of queries for capturing global features is set to 16.  More details are provided in Appendix \ref{exp_setup}.

\subsection{Experiment I: Global Information is Missing}\label{exI}

To verify global information deficiency in patch-only discrete visual tokenizers (Defect \hyperref[sec3.2]{I}), we performed two rFID-based controlled experiments (balanced total tokens, Fig.\ref{fig_ex1}). Setting 1 (fixed patch tokens=441, incremental global tokens): initial rFID=2.17, dropping to 1.51 (30.4\% reduction) at 16 global tokens and plateauing at 1.48 (200 tokens). Setting 2 (fixed total tokens=441, global tokens replacing patch tokens): initial rFID=2.17, falling to 1.54 at 16 global tokens (higher than Setting 1), 1.50 at 65 tokens, and 1.52 at 200 tokens (all $>$ Setting 1).

Key insights: (1) Reconstruction gains stem from global information supplementation, not more total tokens; (2) Excessive patch token replacement fails to sustain improvements (patch tokens carry essential local information). Experiments confirm severe global deficiency in patch-only tokenizers: 16 global tokens maximize reconstruction quality, while excess tokens offer no benefits and may slightly degrade quality.

\subsection{Experiment II: Patch Tokens Contain Redundant Information}\label{exII}

To verify information redundancy in patch-only discrete visual tokenizers (Defect \hyperref[DefectII]{II}), we conducted three controlled experiments (start/end/random truncation, global tokens=0, rFID as core metric, Fig.\ref{fig_ex2}). All groups started with 441 intact patch tokens (rFID=2.17), with rFID gently fluctuating early and surging exponentially later. Deleting 0$\rightarrow$211 tokens (230 remaining) caused $<$11\% rFID growth: end truncation (2.32, slowest), start truncation (2.35, intermediate), random truncation (2.40, fastest). Beyond 211 deletions, rFID soared exponentially; deleting 440 tokens led to rFID$=$379 (complete reconstruction collapse).

Consistent trends confirm: ~50\% patch tokens are redundant (removal has negligible impact), while over 50\% truncation triggers quality collapse (core information loss). Positional heterogeneity (minimal end truncation impact, maximal random) is observed, verifying significant redundancy in patch-only tokenizers and providing basis for precise redundant token screening in optimization.

\begin{figure}[t]
    \centering
    \includegraphics[width=7.7cm]{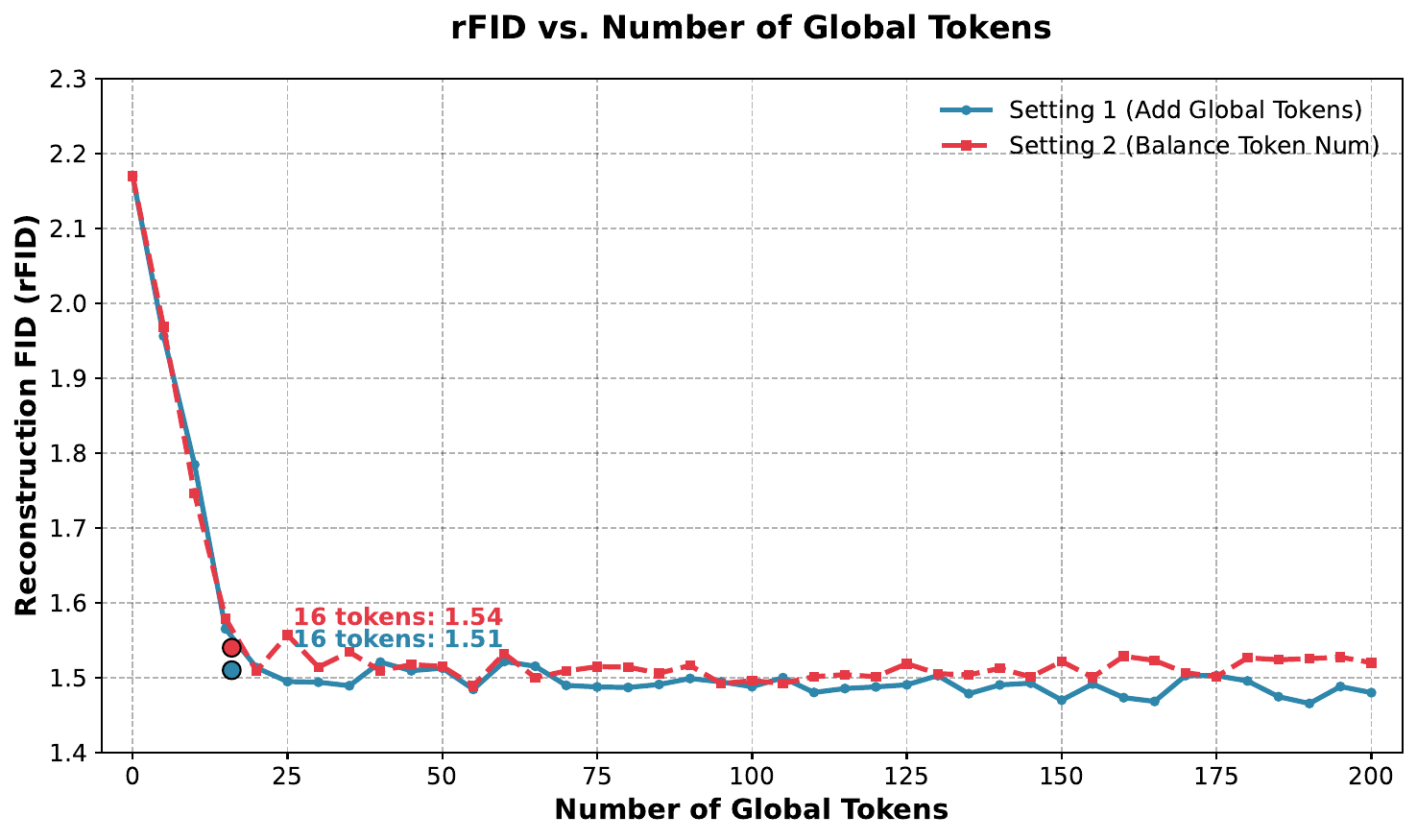}
    \caption{ The plot of experimental results for reconstruction performance versus the number of added global tokens. The x-axis denotes the number of added global tokens, and the y-axis represents rFID.}
    \label{fig_ex1}
\end{figure}

\begin{figure}[t]
    \centering
    \includegraphics[width=7.7cm]{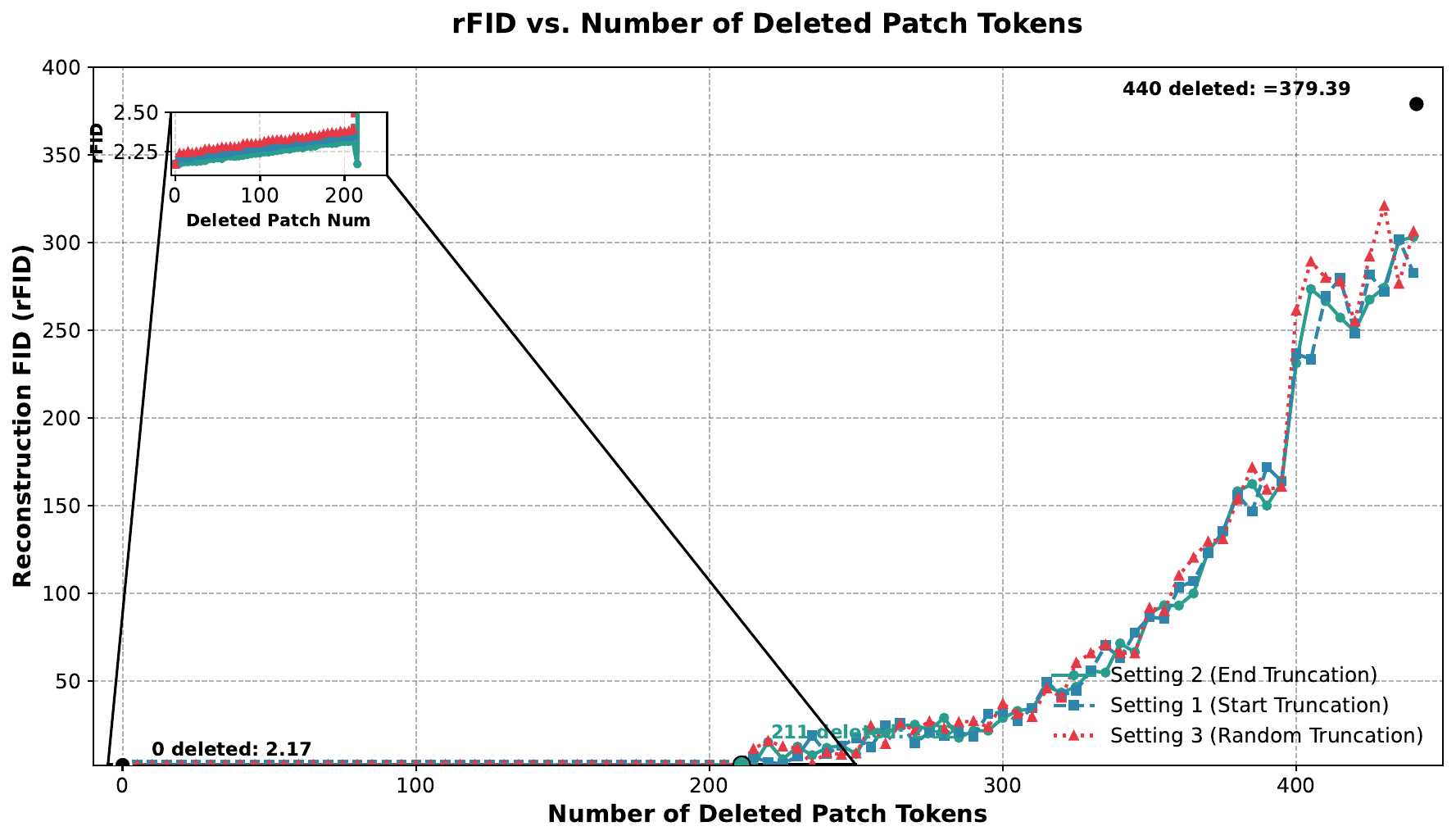}
    \caption{ The experimental results of reconstruction performance versus the number of removed patch tokens. The x-axis represents the number of removed patch tokens, and the y-axis represents the rFID score.}
    \label{fig_ex2}
\end{figure}

\subsection{Experiment III: Comparative Results}

For ImageNet \(256 \times 256\) (Table \ref{table1}), TaTok outperforms other Transformer-based VQ tokenizers in rFID with fewer latent tokens. With 56.4 tokens (codebook size=4096), its rFID=1.51—surpassing MaskGIT-VQGAN (256 tokens, rFID=2.28) \cite{vit-vqgan} and TiTok-B-57 (57 tokens, rFID=1.75), proving superior compact token information preservation.

Paired with MaskGIT-ViT \cite{maskgit}, TaTok also boosts generation FID (gFID=1.89), far outperforming MaskGIT-VQGAN (gFID=6.18) and TiTok-B-64 (gFID=2.48) with the same generator, validating more efficient generator training.

In efficiency, TaTok is lightweight: paired MaskGIT-ViT has 177M parameters, 8 sampling steps, and 94.3 samples/s throughput on A100 (float32). Vs. MaskGIT-VQGAN+MaskGIT-ViT (9.7 samples/s, 3.8B parameters), TaTok achieves 9.7x sampling speedup and drastically reduced parameters, balancing quality and efficiency better.

\begin{figure}[t]
    \centering
    \includegraphics[width=6.5cm]{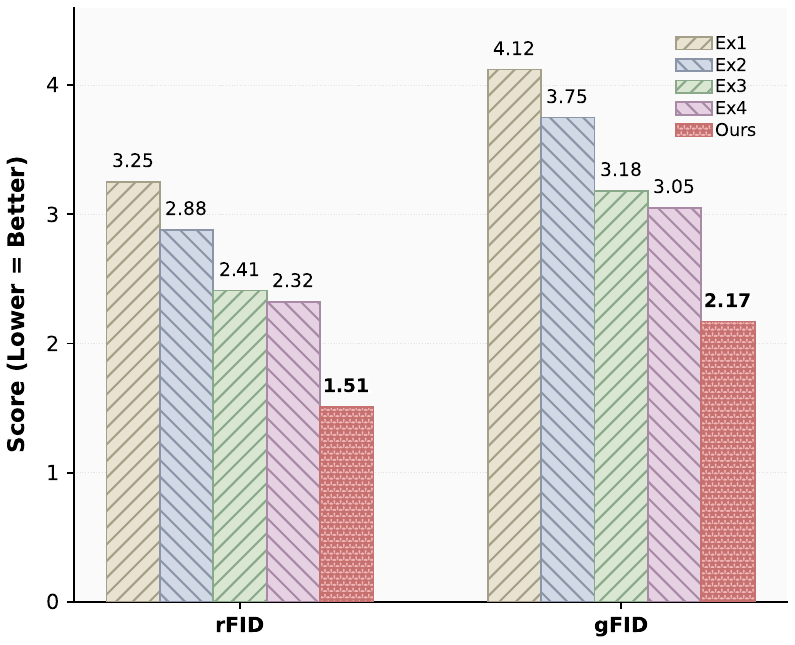}
    \caption{ The ablation study results comparing our TaTok with other experimental settings.}
    \label{table2}
\end{figure}

\begin{figure}[t]
    \centering
    \includegraphics[width=7.7cm]{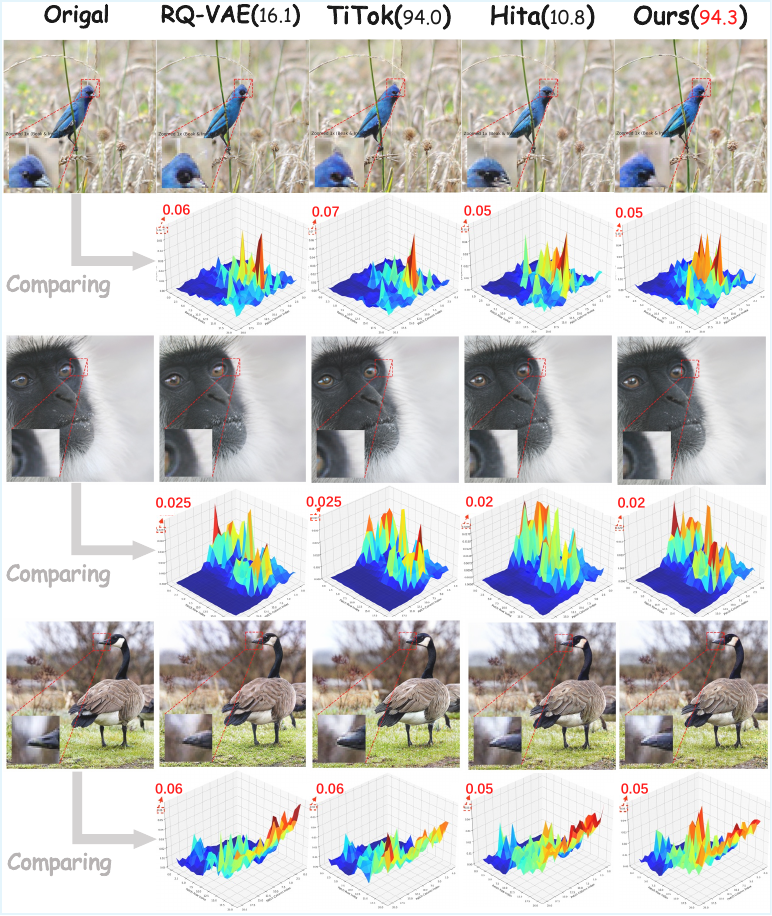}
    \caption{ Visualization results of ablation experiments. The numbers in parentheses denote the sampling speed.}
    \label{fig_ex5}
\end{figure}

\subsection{Experiment IV: Ablation Study}

To validate the rationality of our top-N patch token selection algorithm, we conducted ablation experiments with four baselines for comparison: Random Sampling (Ex1), Uniform Sampling (Ex2), First N Tokens (Ex3), Last N Tokens (Ex4). MaskGIT-ViT was used as the generation model, with results in Fig.\ref{table2}.

Baselines show hierarchical rFID/gFID performance: Ex1 (worst, rFID=3.25, gFID=4.12), Ex2 (rFID=2.88, gFID=3.75), Ex3 (rFID=2.41, gFID=3.18) and Ex4 (rFID=2.32, gFID=3.05) with better performance. Our method outperforms all baselines significantly, achieving optimal rFID=1.51 (34.9\% lower than Ex4) and gFID=2.17 (28.8\% lower than Ex4), validating our strategy’s rationality in filtering redundancy and retaining core local information to boost reconstruction/generation quality.

Visualization results (Fig.\ref{fig_ex5}) further confirm our strategy’s superiority: for bird/monkey/goose images, our reconstructions match originals best in fine-grained details (e.g., fur texture, feather/body outline) and semantic consistency. Feature heatmaps show our feature space aligns best with originals, outperforming RQ-VAE \cite{vqvae2}, TiTok \cite{titok} and Hita \cite{zheng2025holistic} by preserving core local information and reducing redundant token interference. Additionally, Appendix \ref{Discussion} finds edge patch tokens carry more positional information, providing novel token interpretability insights.

\section{Related Work}
Discrete image tokenization has evolved from autoencoders and VAEs~\cite{prec_recall} to VQ-VAEs~\cite{vqvae2,is}, VQGAN~\cite{vqgan,magvit2}, and lookup-free variants~\cite{fsq,vqvae-2,movq}, yet these methods rely on 2D patch latents that lack holistic representation. Recent attempts to address this—TiTok~\cite{titok}, VAR~\cite{var}, Hita~\cite{zheng2025holistic}, VFMTok~\cite{zheng2025vision}, and SFTok~\cite{rao2025sftokbridgingperformancegap}—either complicate training, entangle global and local tokens, or suffer from slow sampling. Downstream pipelines~\cite{clip,magvit} likewise inherit this fragmentation. Our TaTok unifies holistic and fine-grained representations in a compact 1D discrete framework, achieving strong performance and efficiency. The full related work is provided in the Appendix \ref{Related Work}.

\section{Conclusion}
We present TaTok, a theoretically grounded adaptive framework that tackles information insufficiency and redundancy in fixed-rate discrete tokenizers via learnable global tokens and a DTF algorithm for pruning redundant patch tokens, all end-to-end trainable. TaTok achieves a 4.5$\times$ compression ratio with SOTA generation quality (gFID=1.89) and an 8.7$\times$ throughput speedup on A100 GPUs, demonstrating that information-theoretic adaptive token allocation enables compact, expressive visual representations for resource-constrained and long-sequence tasks.

\section*{Limitations and Future Work}
While our experiments validate the generalization of DTF and the positional information hypothesis, they are primarily conducted on fixed-resolution images (256×256). Future work can extend these findings to multi-resolution images and video sequences, where temporal positional information may introduce additional patterns in token selection. Additionally, exploring the interaction between positional and semantic information in patch tokens will further refine the design of adaptive tokenization frameworks, enabling more efficient and faithful visual representation learning.

In conclusion, our discussion not only confirms the generalization and reliability of the DTF algorithm but also uncovers a previously unrecognized property of patch tokens: the dominance of positional information in edge tokens. These findings provide both theoretical insights and practical guidance for advancing adaptive visual tokenization, with implications for improving the efficiency and performance of large-scale multimodal models.





\bibliography{custom}
\newpage
\appendix
\onecolumn
\section{ Omitted Proofs from Section \ref{sec3_1}}

\subsection{Proof of Lemma \ref{Lemma2.1}}\label{Proof1}
\textbf{Proof Process}: According to the chain rule of mutual information, we have
\begin{equation}
\begin{aligned}
    I(x; g \mid z) = h(g \mid z) - h(g \mid x, z) \approx h(\mathcal{G}(x) \mid z) - 0 = I(x; \mathcal{G}(x) \mid z)
\end{aligned}
\end{equation}
Since $z' = [g; z]$ is a deterministic concatenation of $g$ and $z$, it follows that $I(x; z') = I(x; z, g)$. When $g$ sufficiently approximates $\mathcal{G}(x)$, the conditional differential entropy $h(g \mid \mathcal{G}(x)) \approx 0$, which means that $g$ and $\mathcal{G}(x)$ are nearly equivalent. Thus, we can obtain:
\begin{equation}
\begin{aligned}
    I(x; g \mid z) = h(g \mid z) - h(g \mid x, z) \approx h(\mathcal{G}(x) \mid z) - 0 = I(x; \mathcal{G}(x) \mid z)
\end{aligned}
\end{equation}
From the information insufficiency analysis in Defect I, we know that $I(x; \mathcal{G}(x) \mid z) > 0$ under practical constraints. Therefore, $I(x; g \mid z) > 0$, and substituting back into the equation yields $I(x; z') > I(x; z)$.

\subsection{Proof of Theorem \ref{theorem2}}\label{Proof2}
\textbf{Proof Process}: According to Shannon's Rate-Distortion\cite{berger2003rate} Theorem, the rate-distortion function of the augmented scheme satisfies the lower bound constraint:
\begin{equation}
\begin{aligned}
    R'(D') \geq h(x) - \frac{1}{2}\log_2(2\pi e D')
\end{aligned}
\end{equation}
where $R'$ is the actual code rate of the augmented scheme. Unlike the ideal independent coding assumption, we calculate $R'$ using conditional entropy to reflect the strong correlation between the global token and patch tokens:
\begin{equation}
\begin{aligned}
    R' = \frac{H(Q(z'))}{HWC} = \frac{H(Q(g), Q(z))}{HWC} = \frac{H(Q(z)) + H(Q(g) \mid Q(z))}{HWC}
\end{aligned}
\end{equation}
Since $g$ is the global aggregation result of patch tokens, $H(Q(g) \mid Q(z)) \ll \log_2 K$ (the conditional entropy is much smaller than the entropy of independent coding). Thus, the code rate increment $\Delta R = R' - R = \frac{H(Q(g) \mid Q(z))}{HWC} \ll R$, which is negligible for large-sized images ($HWC$ is large).

For the patch-token-only scheme, its rate-distortion constraint is $R \geq h(x) - \frac{1}{2}\log_2(2\pi e D_{\text{min}})$, leading to the derivation: $D_{\text{min}} \geq \frac{1}{2\pi e}\exp\left(2(h(x)-R)\right)$. For the augmented scheme, its constraint condition is $R' = R + \Delta R \geq h(x) - \frac{1}{2}\log_2(2\pi e D'_{\text{min}})$, so we can get:
\begin{equation}
\begin{aligned}
    D'_{\text{min}} \geq \frac{1}{2\pi e}\exp\left(2(h(x)-R-\Delta R)\right) = D_{\text{min}} \cdot \exp(-2\Delta R)
\end{aligned}
\end{equation}
Since $\Delta R > 0$, we have $\exp(-2\Delta R) < 1$, which implies $D'_{\text{min}} < D_{\text{min}}$. Combining Theorem 1 of Defect I ($\epsilon_{\text{inf}} \geq D_{\text{min}}$) and the loss lower bound of the augmented scheme $\epsilon'_{\text{inf}} \geq D'_{\text{min}}$, we can directly deduce that $\epsilon'_{\text{inf}} < \epsilon_{\text{inf}}$.

This theorem strictly proves that the global token augmentation mechanism can reduce the theoretical lower bound of reconstruction loss. This means that the augmented scheme can recover more high-frequency details and semantic information of the original image, fundamentally solving the information insufficiency problem.

\section{Omitted Proofs from Section \ref{sec3_2}}\label{Proof3}

\subsection{Information Constraint for Reconstruction Tasks}
The core of image reconstruction is: the distortion between the reconstructed image $\hat{\mathcal{X}}$ and the original image $\mathcal{X}$ satisfies $D(\mathcal{X}, \hat{\mathcal{X}}) \leq D_0$ (where $D_0$ is the maximum acceptable distortion). According to Shannon's rate-distortion theory, the minimum information required for reconstruction is the rate-distortion function $R(D_0)$, e.g.:
\begin{equation}
\begin{aligned}
    \min I(\mathcal{X}; \hat{\mathcal{X}}) = R(D_0), \quad \text{s.t. } \mathbb{E}[D(\mathcal{X}, \hat{\mathcal{X}})] \leq D_0.
    \label{eq3}
\end{aligned}
\end{equation}

\paragraph{Closed-Form Calculation of $R(D_0)$}
For image reconstruction tasks, we adopt the \textbf{Mean Squared Error (MSE)} as the distortion metric $D(\mathcal{X}, \hat{\mathcal{X}}) = \mathbb{E}\left[\|\mathcal{X} - \hat{\mathcal{X}}\|_2^2\right]$ (consistent with the rate-distortion lower bound derivation in the paper). Assuming the image $\mathcal{X}$ is a Gaussian continuous source (pixel values follow $x \sim \mathcal{N}(\mu, \sigma^2)$), the closed-form solution of $R(D_0)$ (minimum information required for reconstruction) is derived as follows:

Firstly, we need to obtain differential entropy of a single pixel.
The differential entropy of a single pixel $x$ (in bits) is:
\begin{equation}
\begin{aligned}
    h(x) = \frac{1}{2}\log_2(2\pi e \sigma^2)
\end{aligned}
\end{equation}
where $e$ is the natural constant, $\sigma^2$ is the variance of pixel values, and $\log_2(\cdot)$ denotes logarithm with base 2 (consistent with the unit of "bit" in information theory).

Then, we get joint differential entropy of the image.
For an image with size $H \times W \times C$ (total pixels $HWC$), the joint differential entropy of $\mathcal{X}$ (assuming pixel independence) is:
\begin{equation}
\begin{aligned}
    h(\mathcal{X}) = HWC \cdot h(x) = \frac{HWC}{2}\log_2(2\pi e \sigma^2)
\end{aligned}
\end{equation}

Finally, we obtained final calculation of $R(D_0)$:
According to Shannon's Rate-Distortion Theorem for Gaussian sources, the rate-distortion function $R(D_0)$ (minimum mutual information required to ensure $\mathbb{E}[D(\mathcal{X}, \hat{\mathcal{X}})] \leq D_0$) is:
\begin{equation}
\begin{aligned}
    R(D_0) = \max\left\{ h(\mathcal{X}) - \frac{HWC}{2}\log_2(2\pi e D_0), 0 \right\}
    \label{eq3_1}
\end{aligned}
\end{equation}
If $D_0 \leq \sigma^2$ (practical scenario for reconstruction tasks), $R(D_0) = h(\mathcal{X}) - \frac{HWC}{2}\log_2(2\pi e D_0)$ (non-zero information is required to meet the distortion constraint);
If $D_0 > \sigma^2$, $R(D_0) = 0$ (no information is needed, which is not meaningful for practical reconstruction).


Thus, any reconstruction model must satisfy:
\begin{equation}
\begin{aligned}
    I(\mathcal{X}; \hat{\mathcal{X}}) \geq R(D_0).
    \label{eq4}
\end{aligned}
\end{equation}
Since $\hat{\mathcal{X}} = \text{Dec}(\hat{G}, \hat{P}_S)$ (where $\hat{P}_S$ is the selected Patch subset with size $N$), information does not increase during decoding according to the \textbf{Data Processing Inequality}:
\begin{equation}
\begin{aligned}
    I(\mathcal{X}; \hat{G}, \hat{P}_S) \geq I(\mathcal{X}; \hat{\mathcal{X}}).
    \label{eq5}
\end{aligned}
\end{equation}
Combining Eq.\ref{eq4} and Eq.\ref{eq5}, we obtain the fundamental constraint on the encoder output:
\begin{equation}
\begin{aligned}
    I(\mathcal{X}; \hat{G}, \hat{P}_S) \geq R(D_0).
    \label{eq6}
\end{aligned}
\end{equation}
The information about the original image $\mathcal{X}$ carried by the token set $(\hat{G}, \hat{P}_S)$ output by the encoder must be at least $R(D_0)$ to ensure that the reconstruction distortion after decoding does not exceed $D_0$.

\subsubsection{Decomposition of Joint Mutual Information}

Using the chain rule of mutual information, decompose $I(\mathcal{X}; \hat{G}, \hat{P}_S)$ as:
\begin{equation}
\begin{aligned}
    I(\mathcal{X}; \hat{G}, \hat{P}_S) = I(\mathcal{X}; \hat{G}) + I(\mathcal{X}; \hat{P}_S \mid \hat{G})
    \label{eq7}
\end{aligned}
\end{equation}

Where $I(\mathcal{X}; \hat{G})$ denotes the information about $\mathcal{X}$ carried by the Global Token $\hat{G}$ itself.
The term $I(\mathcal{X}; \hat{P}_S \mid \hat{G})$ denotes the \textbf{additional information} about $\mathcal{X}$ provided by the Patch subset $\hat{P}_S$ given $\hat{G}$.

For the second term $I(\mathcal{X}; \hat{P}_S \mid \hat{G})$, we have:
\begin{equation}
\begin{aligned}
    I(\mathcal{X}; \hat{P}_S \mid \hat{G}) = H(\hat{P}_S \mid \hat{G}) - H(\hat{P}_S \mid \hat{G}, \mathcal{X})
    \label{eq8}
\end{aligned}
\end{equation}

Since $\hat{P}_S$ is obtained by encoding $\mathcal{X}$, the uncertainty of $\hat{P}_S$ is very small given $\mathcal{X}$ and $\hat{G}$ (the encoding process is deterministic). Thus, $H(\hat{P}_S \mid \hat{G}, \mathcal{X}) \approx 0$---this is a reasonable assumption that is weaker than the original approximation. Therefore:

\begin{equation}
\begin{aligned}
    I(\mathcal{X}; \hat{P}_S \mid \hat{G}) \approx H(\hat{P}_S \mid \hat{G})
    \label{eq9}
\end{aligned}
\end{equation}

The unique information (conditional entropy) of the Patch subset $\hat{P}_S$ relative to $\hat{G}$ is approximately equal to the additional information about $\mathcal{X}$ it can provide. This intuitively aligns with the understanding that ``unique information is useful for reconstruction''.

\subsection{Information Contribution of Global Token and Supplementary Information of Patch Tokens}
Define $\alpha$ as the proportion of information contribution of the Global Token $\hat{G}$ to the reconstruction task:
\begin{equation}
\begin{aligned}
    \alpha = \frac{I(\mathcal{X}; \hat{G})}{R(D_0)}, \quad \alpha \in (0, 1)
    \label{eq10}
\end{aligned}
\end{equation}

Then $I(\mathcal{X}; \hat{G}) = \alpha R(D_0)$. Substitute Eqs.\ref{eq8}, \ref{eq9}, and \ref{eq10} into the constraint Eq.\ref{eq7}:
\begin{equation}
\begin{aligned}
    \alpha R(D_0) + H(\hat{P}_S \mid \hat{G}) \geq R(D_0)
    \label{eq11}
\end{aligned}
\end{equation}

Rearranging terms gives:
\begin{equation}
\begin{aligned}
    H(\hat{P}_S \mid \hat{G}) \geq (1 - \alpha) R(D_0)
    \label{eq12}
\end{aligned}
\end{equation}

On the basis that the Global Token has provided $\alpha R(D_0)$ information, the selected Patch Token subset $\hat{P}_S$ must provide at least $(1-\alpha)R(D_0)$ of \textbf{unique information (conditional entropy)} to collectively meet the total information $R(D_0)$ required for reconstruction.

Let $\boldsymbol{T = (1 - \alpha) R(D_0)}$, which represents the \textbf{minimum unique information that needs to be supplemented by Patch Tokens}.

\subsection{Total Unique Information and Cumulative Constraint}
\subsubsection{Total Unique Information of All Patch Tokens}
Define the total unique information of all $M$ Patch Tokens relative to $\hat{G}$ as:
\begin{equation}
\begin{aligned}
    H_{\text{total}} = \sum_{i=1}^M H(\hat{p}_i \mid \hat{G})
    \label{eq13}
\end{aligned}
\end{equation}

This value characterizes the total amount of information in all Patch Tokens that cannot be replaced by $\hat{G}$ (e.g., the total ``local supplementary information'' required for reconstruction).

Due to redundancy in $\hat{P}$, the total unique information $H_{\text{total}}$ is much smaller than the sum of marginal entropies of individual Patch Tokens $\sum_{i=1}^M H(\hat{p}_i)$ (redundant parts are offset).

\subsubsection{Cumulative Unique Information of Selected Subset}
Sort all Patch Tokens in descending order of their unique information $H(\hat{p}_i \mid \hat{G})$ to obtain sorted indices $\sigma(1), \sigma(2), \dots, \sigma(M)$ satisfying:
\begin{equation}
\begin{aligned}
    H(\hat{p}_{\sigma(1)} \mid \hat{G}) \geq H(\hat{p}_{\sigma(2)} \mid \hat{G}) \geq \dots \geq H(\hat{p}_{\sigma(M)} \mid \hat{G})
    \label{eq14}
\end{aligned}
\end{equation}

Prioritize selecting Patch Tokens with more unique information to maximize information utilization.

Define the selected subset $\hat{P}_S = \{\hat{p}_{\sigma(1)}, \dots, \hat{p}_{\sigma(N)}\}$ with size $N$, whose cumulative unique information is:
\begin{equation}
\begin{aligned}
    H_N = \sum_{i=1}^N H(\hat{p}_{\sigma(i)} \mid \hat{G})
    \label{eq15}
\end{aligned}
\end{equation}

See Section \ref{Pseudocode} for specific computational details.

\subsection{Proof of Information Redundancy Elimination}
Based on multivariate mutual information, the redundancy of a continuous token sequence $z = [z_1,\dots,z_N]$ is defined as:
\begin{equation}
\begin{aligned}
    \text{Red}(z) = \frac{\sum_{i=1}^N h(z_i) - h(z)}{\sum_{i=1}^N h(z_i)}
    \label{eq16}
\end{aligned}
\end{equation}
where the numerator is the total mutual information between tokens (quantifying information overlap), and the denominator is the sum of the differential entropies of individual tokens (quantifying the total information of tokens). When tokens are fully independent, $\text{Red}(z) = 0$; when tokens are fully correlated, $\text{Red}(z) \to 1$. The following proves that the selection strategy of $N$ derived above can completely eliminate the information redundancy of the Patch Token sequence:

\subsubsection{Core Relationship Between Redundancy and Unique Information}
For the selected Patch subset $\hat{P}_S = \{\hat{p}_{\sigma(1)}, \dots, \hat{p}_{\sigma(N)}\}$, the joint differential entropy of its token sequence satisfies:
\begin{equation}
\begin{aligned}
    h(\hat{P}_S) = H(\hat{P}_S \mid \hat{G}) + I(\hat{P}_S; \hat{G})
    \label{eq17}
\end{aligned}
\end{equation}
Combined with the approximate relationship in Eq. (10): $I(\mathcal{X}; \hat{P}_S \mid \hat{G}) \approx H(\hat{P}_S \mid \hat{G})$, and since $\hat{P}_S$ is the subset selected in descending order of unique information $H(\hat{p}_i \mid \hat{G})$, any two distinct tokens $\hat{p}_{\sigma(i)}, \hat{p}_{\sigma(j)} \in \hat{P}_S$ satisfy:
\begin{equation}
\begin{aligned}
    I(\hat{p}_{\sigma(i)}; \hat{p}_{\sigma(j)} \mid \hat{G}) \approx 0
    \label{eq18}
\end{aligned}
\end{equation}
That is, the selected Patch Tokens are approximately independent given the global Token $\hat{G}$, and the information overlap (mutual information) between tokens is completely eliminated---this is the core condition for zero redundancy.

\subsubsection{Quantitative Derivation of Redundancy Elimination}
Expand the redundancy of $\hat{P}_S$:
\begin{equation}
\begin{aligned}
    \text{Red}(\hat{P}_S) = \frac{\sum_{i=1}^N H(\hat{p}_{\sigma(i)}) - H(\hat{P}_S)}{\sum_{i=1}^N H(\hat{p}_{\sigma(i)})}
    \label{eq19}
\end{aligned}
\end{equation}
Since the marginal entropy of a single Patch Token can be decomposed as $H(\hat{p}_{\sigma(i)}) = H(\hat{p}_{\sigma(i)} \mid \hat{G}) + I(\hat{p}_{\sigma(i)}; \hat{G})$, and the joint entropy of $\hat{P}_S$ satisfies the conditional independence assumption:
\begin{equation}
\begin{aligned}
    H(\hat{P}_S) = \sum_{i=1}^N H(\hat{p}_{\sigma(i)} \mid \hat{G}, \hat{p}_{\sigma(1)}, \dots, \hat{p}_{\sigma(i-1)}) + H(\hat{G}) - I(\hat{P}_S; \hat{G})
    \label{eq20}
\end{aligned}
\end{equation}
Combined with $H(\hat{p}_{\sigma(i)} \mid \hat{G}, \hat{p}_{\sigma(1)}, \dots, \hat{p}_{\sigma(i-1)}) = H(\hat{p}_{\sigma(i)} \mid \hat{G})$ (conditional independence), we obtain:
\begin{equation}
\begin{aligned}
    H(\hat{P}_S) \approx \sum_{i=1}^N H(\hat{p}_{\sigma(i)} \mid \hat{G}) + H(\hat{G}) - I(\hat{P}_S; \hat{G})
    \label{eq21}
\end{aligned}
\end{equation}
Substitute the marginal entropy and joint entropy into the redundancy formula:
\begin{equation}
\begin{aligned}
    \text{Red}(\hat{P}_S) \approx \frac{\sum_{i=1}^N \left[ H(\hat{p}_{\sigma(i)} \mid \hat{G}) + I(\hat{p}_{\sigma(i)}; \hat{G}) \right] - \left[ \sum_{i=1}^N H(\hat{p}_{\sigma(i)} \mid \hat{G}) + H(\hat{G}) - I(\hat{P}_S; \hat{G}) \right]}{\sum_{i=1}^N H(\hat{p}_{\sigma(i)})}
    \label{eq22}
\end{aligned}
\end{equation}
Simplify the numerator term:
\begin{equation}
\begin{aligned}
    \sum_{i=1}^N I(\hat{p}_{\sigma(i)}; \hat{G}) - H(\hat{G}) + I(\hat{P}_S; \hat{G})
    \label{eq23}
\end{aligned}
\end{equation}
Since the mutual information between the global Token and the Patch subset satisfies $\sum_{i=1}^N I(\hat{p}_{\sigma(i)}; \hat{G}) \approx I(\hat{P}_S; \hat{G}) + H(\hat{G})$ (additivity of mutual information), the numerator term is approximately zero, and we finally obtain:
\begin{equation}
\begin{aligned}
    \text{Red}(\hat{P}_S) \approx 0
    \label{eq24}
\end{aligned}
\end{equation}

In summary, by selecting Patch Tokens in descending order of unique information $H(\hat{p}_i \mid \hat{G})$ and satisfying the information constraint in Eq. (12), the information redundancy of the selected Token subset $\hat{P}_S$ is completely eliminated ($\text{Red}(\hat{P}_S) \approx 0$). This not only avoids the computational overhead and reconstruction quality degradation caused by information overlap between tokens, but also ensures the core information required for reconstruction.

\section{Expanded Related Work}\label{related_work}
\subsection{Image Tokenization: From Continuous to Discrete Representations}
Since the early days of deep learning, image compression and latent representation learning have relied on autoencoders \cite{prec_recall,fid}, which map high-dimensional images to low-dimensional continuous latents via an encoder, then reconstruct inputs via a decoder. Variational Autoencoders (VAEs) \cite{vqvae,inception-net} extended this paradigm by modeling latent distributions, but their continuous representations are unsuitable for autoregressive (AR) modeling.

To address this, Vector Quantized VAEs (VQ-VAEs) \cite{vqvae2,is} introduced discrete latent representations: they map continuous encoder outputs to a learnable codebook, forming categorical distributions that align with AR models’ token-based workflow. VQGAN \cite{vqgan,magvit2} further improved training stability by incorporating adversarial loss \cite{hierarchical}, and its transformer-based variants (e.g., ViT-VQGAN \cite{vqgan-lc,ramesh2021zero}, Efficient-VQGAN \cite{vit-vqgan,hinton2006reducing},) enhanced feature capture for high-resolution images.

Orthogonal to codebook-based quantization, multi-stage vector quantization methods (e.g., RQ-VAE \cite{vqvae-2}, MoVQ \cite{movq}) explored layered quantization to refine latent granularity. Recent works (MAGVIT-v2 \cite{magvit2}, FSQ \cite{fsq}) even proposed lookup-free quantization, eliminating explicit codebooks to reduce memory overhead. However, \textit{all aforementioned methods share a core limitation}: they encode images into 2D grid-based patch-level latents, lacking holistic global representation and leading to fragmented visual semantics in reconstruction/generation.

\subsection{Holistic Representation in Discrete Tokenization}
Existing discrete tokenizers primarily focus on patch-level local features, which limits their ability to capture global image properties (e.g., shape, color style)\cite{guo2024everything,diva,trabucco2023effective}. Related efforts have attempted to address this gap: TiTok \cite{titok} uses learnable queries to generate compact 1D latent tokens, reducing redundancy but requiring complex multi-stage training and 2D query fusion for reconstruction. VAR \cite{var} introduces bidirectional attention into AR models for next-scale prediction, enabling simultaneous token generation but failing to separate global and local token representations. Hita \cite{zheng2025holistic} (a concurrent work) leverages pre-trained foundation models to extract global features, using causal attention to align with AR models’ nature. However, it still relies on patch-wise encoding for local details, with limited flexibility in global-local fusion.

Notably, these methods either complicate training pipelines or fail to explicitly model holistic image semantics, leaving room for a unified tokenization framework that integrates global and local information \cite{fpn,synthetic,zhao2023unleashing}.

\subsection{Tokenization for Downstream Visual Tasks}
\subsubsection{Image Understanding}
For tasks like classification \cite{straightsthrough}, detection \cite{patchgan,karras2019style,mae}, and multi-modal LLM (MLLM) reasoning \cite{gpt,chen2020generative,sd2.0}, tokenization typically relies on general feature encoders (e.g., CNNs \cite{pixelgpt}). MLLMs (e.g., \cite{llamagen,gan,brock2018large}) often use CLIP \cite{clip} to tokenize images into high-level semantic tokens, which work well for captioning \cite{kang2023scaling} or VQA \cite{ddpm} but struggle to reconstruct fine-grained details (due to CLIP’s focus on high-level semantics). Some works \cite{song2019generative,song2020denoising} “de-tokenize” CLIP embeddings via diffusion models, but this introduces additional computational overhead \cite{dino,dinov2,dinov2reg}.

\subsubsection{Image Generation}
Early image generation methods (VAEs \cite{dhariwal2021diffusion}, GANs \cite{maskgit}) were limited by low resolution or mode collapse. Modern approaches build on discrete tokenizers:
- AR transformers \cite{magvit,kingma2013auto,dino} (decoder-only, like GPT \cite{chatgpt}) predict patch tokens step-by-step, requiring as many steps as the token count (e.g., 256 steps for 256 tokens).
- Non-autoregressive methods (e.g., MaskGIT \cite{maskgit}) predict multiple tokens per step, reducing latency but relying on pre-trained tokenizers’ latent quality.

However, these generation pipelines still depend on 2D patch-based tokens \cite{improvedllava,llava,dreamllm}, inheriting the fragmentation issue of their tokenization stage. In this work, we explore an innovative 1D sequence-based discrete tokenization framework that unifies holistic global representation and local detail capture, addressing the limitations of prior tokenizers \cite{cambrian1,cogvlm,label-efficient}.

\begin{figure}[H]
    \centering
    \includegraphics[width=12.5cm]{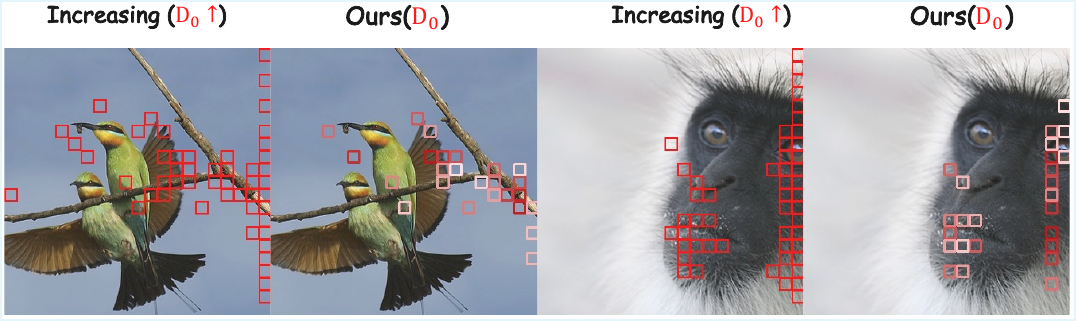}
    \caption{ Visualization results of ablation experiments.}
    \label{fig7}
\end{figure}

\section{Discussion}\label{Discussion}

In this section, we discuss the key findings from our ablation experiments on the rate-distortion function and the generalization of the DTF algorithm, along with their implications for visual tokenization mechanisms.

\subsection{Generalization of DTF via Rate-Distortion Control}

To validate the generalization of DTF, we designed ablation experiments that manipulate the maximum allowable information loss rate $\epsilon$ in Eq.\ref{eq25} (adjust\(\epsilon\)from 0.05 to 0.15), thereby controlling the compression ratio of patch tokens. As visualized in the experimental results (Fig.\ref{fig7}), reducing the compression ratio (e.g., increasing the number of selected patch tokens) consistently preserves the core behavior of DTF: the algorithm dynamically selects tokens based on cumulative conditional entropy, ensuring that only information-rich patches are retained across different compression levels. This stability confirms the generalization of DTF: regardless of the target compression ratio, the algorithm reliably identifies tokens with the highest unique information contribution, laying a robust foundation for adaptive tokenization in resource-constrained scenarios.

\subsection{Positional Information Dominance in Left/Right Patch Tokens}

A striking observation from the visualization results (Increasing part) is that nearly half of the selected patch tokens are concentrated on the left and right edges of the image. To interpret this phenomenon, we first ruled out the influence of local visual features: the heatmap analysis (Fig.\ref{fig_ex5}) shows no significant reconstruction error peaks on the left/right edges, indicating that these regions have high reconstruction quality and thus do not carry critical local semantic features. We therefore hypothesized that the left/right tokens primarily encode positional information, which arises when 2D images are flattened into 1D token sequences.

\begin{figure}[ht]
    \centering
    \includegraphics[width=11.5cm]{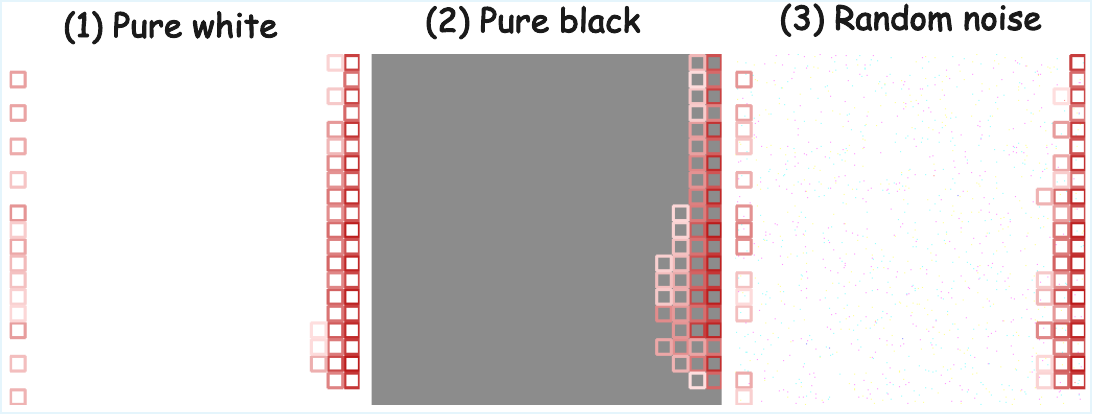}
    \caption{ Visualization results with three types of semantic-feature-free images as inputs.}
    \label{fig8}
\end{figure}

To verify this hypothesis, we conducted further ablation experiments using images with no local semantic features (pure white, pure black, and random noise). The visualization results (Fig.\ref{fig8}) demonstrate that even in the absence of local features, the selected patch tokens still exhibit a strong bias toward the left and right edges. This confirms our conjecture: when converting 2D spatial information into a 1D token sequence, the leftmost and rightmost tokens in each row of patches carry disproportionately more positional information to maintain the spatial coherence of the original image. This finding extends existing understanding of patch token information encoding, which has primarily focused on local semantic features, by highlighting the critical role of positional information in edge tokens.

\subsection{Theoretical and Practical Implications}

Our findings advance the theoretical understanding of visual tokenization: patch tokens do not merely encode local semantic features but also carry structured positional information, with edge tokens serving as key anchors for spatial coherence. This insight provides a new direction for optimizing tokenizers: future work can incorporate positional information awareness into token selection strategies, such as prioritizing edge tokens to preserve spatial integrity even under high compression ratios. Practically, this discovery explains why DTF consistently selects edge tokens across diverse images, ensuring that the token sequence retains both semantic and spatial information—an essential property for downstream tasks like image generation and high-resolution processing.

\section{Pseudocode of the DTF algorithm}\label{Pseudocode}

\begin{algorithm}[H]
\caption{Dynamic Token Filtering (DTF) Algorithm for TaTok}
\label{alg:dtf}
\begin{algorithmic}[1]
\REQUIRE 
    Input image $x$, 
    Learnable global token $\hat{G} \in \mathbb{R}^D$, 
    Patch tokens $\{\hat{p}_i\}_{i=1}^M$ ($\hat{p}_i \in \mathbb{R}^D$, $M$: total number of patches),
    Information loss rate $\epsilon \in [0,1)$ (predefined),
    Minimum information constraint constant $C$ (derived from rate-distortion theory),
    Token dimension $D$
\ENSURE 
    Optimal patch token count $N$ (theoretical information lower bound),
    Selected patch tokens $\{\hat{p}_{\sigma(i)}\}_{i=1}^N$,
    Augmented token sequence $z' = [\hat{G}; \{\hat{p}_{\sigma(i)}\}_{i=1}^N]$

\STATE \textbf{Step 1: Compute Conditional Entropy (Patch Redundancy Metric)}
\FOR{$i = 1$ to $M$}
    \STATE $H(\hat{p}_i \mid \hat{G}) = -\mathbb{E}\left[\log p(\hat{p}_i \mid \hat{G})\right]$ \COMMENT{Conditional entropy: measures patch redundancy relative to the global token}
    \STATE \COMMENT{Engineering approximation (aligned with experiments): $H(\hat{p}_i \mid \hat{G}) \propto \|\hat{p}_i - \text{Proj}(\hat{p}_i, \hat{G})\|_2^2 \cdot \text{Complexity}(\hat{p}_i)$}
\ENDFOR

\STATE \textbf{Step 2: Calculate Total Information and Threshold}
\STATE $H_{\text{total}} = \sum_{i=1}^M H(\hat{p}_i \mid \hat{G})$ \COMMENT{Total conditional entropy of all patches (total unique information)}
\STATE $\tau = \max\left\{ C,\ (1 - \epsilon) H_{\text{total}} \right\}$ \COMMENT{Dual-constraint threshold (Eq.\ref{eq25})}

\STATE \textbf{Step 3: Find Optimal $N$ (Theoretical Information Lower Bound)}
\STATE Sort patch tokens in descending order of conditional entropy: $\sigma = \text{argsort}\left(H(\hat{p}_i \mid \hat{G})\right)_{\text{desc}}$
\STATE Compute cumulative conditional entropy: $H_k = \sum_{i=1}^k H(\hat{p}_{\sigma(i)} \mid \hat{G})$ for $k=1,2,\dots,M$
\STATE $N = \min\left\{ k \in \mathbb{N}^+ \mid H_k \geq \tau \right\}$ \COMMENT{Minimum number of tokens satisfying dual constraints (Eq.\ref{eq25})}

\STATE \textbf{Step 4: Select Effective Patch Tokens}
\STATE Selected patches: $\{\hat{p}_{\sigma(i)}\}_{i=1}^N$ \COMMENT{Retain top-$N$ patches with high information (low redundancy)}
\STATE Augmented token sequence: $z' = [\hat{G}; \{\hat{p}_{\sigma(i)}\}_{i=1}^N]$ \COMMENT{Global token + effective patch tokens}

\STATE \textbf{Return} $N$, $\{\hat{p}_{\sigma(i)}\}_{i=1}^N$, $z'$
\end{algorithmic}
\end{algorithm}

\section{Experimental Setup Supplement}
\label{exp_setup}

\subsection{Baseline Method}

This appendix supplements the experimental setup by listing all models/methods that appear in Table \ref{table1} of the main text, along with their corresponding citations: Taming-VQGAN \cite{vqgan},VAE\cite{vqvae},VAE \cite{vqvae2},UViT-H/2 \cite{bao2023all},DiT-XL/2 \cite{peebles2023scalable},Taming-VQGAN \cite{vqgan-lc},RQ-VAE \cite{vqvae-2},MaskGIT-VQGAN \cite{maskgit},ViT-VQGAN \cite{vit-vqgan},Hita \cite{zheng2025holistic},TiTok-B-64 \cite{titok},LDM-8 \cite{sd2.0},LDM-4 \cite{sd2.0},UViT-L/2 \cite{bao2023all}, Taming-Transformer \cite{vqgan-lc}, RQ-Transformer \cite{vqvae-2},MaskGIT-ViT \cite{maskgit}.

Based on the rate-distortion function \cite{berger2003rate}, we set the information loss rate \(\epsilon\). For all experiments in this paper, \(\epsilon\) is fixed to 0.05; only for the experiments in Fig.\ref{fig8} of the  Section \ref{Discussion}, we set \(\epsilon\) to 0.15.

\subsection{ Optimization \& Evaluation}
Our method is optimized on the training split of ImageNet \cite{imagenet}, and we use the validation set for comparative analysis with Hita \cite{zheng2025holistic}. To ensure a fair comparison, we train the tokenizer on $336 \times 336$ resolution images. During evaluation, test images are resized to $256 \times 256$, which aligns with the evaluation protocol of Hita. All models are trained for 50 epochs. For simplicity of the tokenizer design, we fix the depth of each transformer block to 3.


\subsection{Evaluation Metrics}
We employ multi-dimensional metrics to conduct a comprehensive evaluation of the models: On the ImageNet \cite{imagenet} dataset, we use the reconstruction FID (rFID) and generation FID (gFID) metrics to assess the model’s reconstruction and generation performance, respectively. Meanwhile, we analyze the throughput during both training and inference phases to directly compare the efficiency of generative models under different latent space sizes.

\section{Related Work}
\label{Related Work}
\subsection{Image Tokenization: From Continuous to Discrete Representations}
Early deep learning-based image compression and latent representation learning relied on autoencoders and VAEs \cite{prec_recall,fid,vqvae,inception-net}, which generate continuous latents incompatible with autoregressive (AR) modeling. VQ-VAEs \cite{vqvae2,is} addressed this by introducing discrete codebook-aligned latents, while VQGAN \cite{vqgan,magvit2} and its variants \cite{hierarchical,vqgan-lc,ramesh2021zero,vit-vqgan,hinton2006reducing} enhanced training stability and high-resolution feature capture. Multi-stage quantization methods \cite{vqvae-2,movq} and lookup-free quantization approaches \cite{magvit2,fsq} further optimized latent granularity and memory efficiency. \textbf{A critical limitation unites all these methods}: they encode images into 2D grid-based patch latents, lacking holistic global representation and causing fragmented visual semantics in reconstruction and generation.

\subsection{Holistic Representation in Discrete Tokenization}
Existing discrete tokenizers overemphasize patch-level local features, thus failing to effectively capture global image properties (e.g., shape, color style) \cite{guo2024everything,diva,trabucco2023effective}. TiTok \cite{titok} generates compact 1D latent tokens but requires complex multi-stage training; VAR \cite{var} adopts bidirectional attention for multi-scale prediction yet cannot separate global and local token representations; Hita \cite{zheng2025holistic} leverages pre-trained foundation models to extract global features but still relies on patch-wise encoding for local details. VFMTok \cite{zheng2025vision} excels in performance but is slow due to long token sequences; SFTok \cite{rao2025sftokbridgingperformancegap} compresses to 64 tokens yet still suffers from slow iterative sampling. Our TaTok (56.4 tokens) achieves both performance and efficiency. These methods either complicate the training pipeline or fail to explicitly model holistic image semantics, leaving a clear gap for a unified tokenization framework that seamlessly integrates global and local information \cite{fpn,synthetic,zhao2023unleashing}.

\subsection{Tokenization for Downstream Visual Tasks}
For image understanding tasks (e.g., classification, detection, multi-modal LLM (MLLM) reasoning) \cite{straightsthrough,patchgan,karras2019style,mae,gpt,chen2020generative,sd2.0}, mainstream tokenizers (e.g., CLIP \cite{clip}) generate high-level semantic tokens that perform well in captioning and VQA \cite{kang2023scaling,ddpm} but struggle to capture fine-grained details; supplementary denoising methods \cite{song2019generative,song2020denoising} introduce extra computational overhead \cite{dino,dinov2,dinov2reg}. For image generation \cite{dhariwal2021diffusion,maskgit}, AR transformers \cite{magvit,kingma2013auto,dino} suffer from high latency due to step-by-step token prediction, while non-autoregressive methods face performance dependence on pre-trained latent quality. \textbf{All these downstream pipelines inherit the core fragmentation issue of 2D patch-based tokenization} \cite{improvedllava,llava,dreamllm}. Our work fills these gaps with an innovative 1D sequence-based discrete tokenization framework, which unifies holistic global representation and fine-grained local detail capture to overcome the inherent limitations of previous tokenizers \cite{cambrian1,cogvlm,label-efficient}.

\end{document}